\title{Observation-Free Attacks on Online Learning to Rank}
\author{
  Sameep Chattopadhyay$^{\dagger}$ \quad
  Nikhil Karamchandani$^{\diamond}$ \quad
  Sharayu Moharir$^{\diamond}$ \\[2mm]
  $^{\dagger}$Paul G.\ Allen School of Computer Science \& Engineering, University of Washington \\
  $^{\diamond}$Department of Electrical Engineering, Indian Institute of Technology Bombay \\
  \texttt{sameepch@uw.edu} \quad
  \texttt{nikhilk@ee.iitb.ac.in} \quad
  \texttt{sharayum@ee.iitb.ac.in}
}
\begin{document}

\maketitle
\vspace*{-1.5em}
\begin{abstract}
\vspace*{-0.5em}
Online learning to rank (OLTR) plays a critical role in information retrieval and machine learning systems, with a wide range of applications in search engines and content recommenders. However, despite their extensive adoption, the susceptibility of OLTR algorithms to coordinated adversarial attacks remains poorly understood. In this work, we present a novel framework for attacking some of the widely used OLTR algorithms. Our framework is designed to promote a set of target items so that they appear in the list of top-$\nreco$ recommendations for $T - o(T)$ rounds, while simultaneously inducing linear regret in the learning algorithm. We propose two novel attack strategies: $\ofcs$ for $\cascadeucb$ and $\ofpb$ for $\pbmucb$. We provide theoretical guarantees showing that both strategies require only $O(\log T)$ manipulations to succeed. Additionally, we supplement our theoretical analysis with empirical results on real-world data.
\end{abstract}
%\vspace*{-1em}
\vspace*{-0.5em}
\section{Introduction}
\vspace*{-0.5em}
\textit{Online Learning to Rank} (OLTR) \cite{10.1145/2911451.2914798} is a sequential decision-making framework widely used in information retrieval systems \cite{liu2009learning} to rank items based on user feedback. In an OLTR setup, the learning agent presents a ranked list of items to the user in each round, and the user provides implicit feedback by interacting with the list in some manner. One of the most widely studied forms of such feedback is click behavior, which the agent seeks to model and optimize in order to improve its recommendations. Over the years, several click feedback models have been studied, with two of the most prominent being the \textit{Cascade} model \cite{cascademod}, where the user examines the list sequentially from top to bottom and clicks the first item they find ``attractive'', therefore resulting in a single click per round, and the \textit{Position-based model} \cite{chuklin2015click}, which allows multiple clicks in a round and assumes that the probability of an item being examined depends on its position in the list.

Due to the sequential nature of OLTR, several studies have attempted to reformulate the problem within the framework of \textit{Multi-Armed Bandits} (MABs) \cite{lattimore2020bandit}. Recent works have developed efficient learning algorithms for the Cascade  \cite{kveton15cascade, 10.5555/2969239.2969401, Zong2016CascadingBF} and the Position-based \cite{PBM-UCB} feedback models. Additionally, more general algorithms that accommodate a broader class of user feedback structures, encompassing both the above models, have also been proposed and analyzed \cite{10.5555/3305890.3306115, Lattimore2018TopRankAP}.  

Given the widespread use of online decision-making systems, understanding their response to coordinated attacks is crucial. While extensive research has explored adversarial attack strategies for the MABs, with frameworks like reward manipulation and action manipulation  \cite{Jun2018AdversarialAO,NEURIPS2021_be315e7f,Liu2019DataPA,Wang2024StealthyAA,Rangi2021SavingSB}, relatively little attention has been given to such attacks in the OLTR setting \cite{10.5555/3666122.3667927,wang2024adversarial}.

This work focuses on a specific class of reward manipulation strategies known as observation-free attacks \cite{NEURIPS2021_be315e7f}, where a collective adversary generates (or manipulates) reward signals to promote a set of pre-determined items in a learning algorithm without having access to the algorithm’s feedback. A key feature of these observation-free attacks is that they can be executed in the real world by groups of ordinary individuals who, in spite of lacking access to the feedback from other users, can influence the algorithm to promote unattractive yet socially beneficial content through their own coordinated actions. This makes them a compelling instance of \textit{algorithmic collective action} \cite{10.5555/3618408.3618918,10.5555/3737916.3741700}.

Despite their real-world applications and proven success for MABs \cite{NEURIPS2021_be315e7f}, no effective observation-free attack strategy has yet been demonstrated for even the most common OLTR algorithms \cite{wang2024adversarial}.
\label{sec:introduction}
\vspace*{-0.5em}
\subsection{Our Contributions}
\vspace*{-0.5em}
In this paper, we present the first study on observation-free attacks against OLTR algorithms with click feedback.  Our technical contributions are summarized as follows.
%\begin{enumerate}
 %   \item[--] 
 In Section \ref{sec:genattack}, we introduce a general framework for a limited-time observation-free attack strategy to promote a set of target items in OLTR.
 %   \item[--] 
 Based on this framework, we propose two novel attack strategies, namely $\ofcs$ and $\ofpb$ for the Cascade and the  Position-based feedback models, respectively.
 %   \item[--] 
 We prove that both attack strategies require just \(O(\log T)\) reward manipulations to successfully promote their target elements for \(T - o(T)\) rounds and impose \(\Omega(T)\) regret on their respective learning algorithms. 
 %   \item[--]
 Our work also addresses some of the major drawbacks of the earlier attack strategies for OLTR, specifically the need for continuous manipulation of rewards in \cite{10.5555/3666122.3667927} and the inability to attack UCB-based OLTR algorithms through reward manipulation in \cite{wang2024adversarial}.
 %   \item[--] 
 We supplement our analysis with empirical evaluation of our strategies using the MovieLens dataset \cite{Harper2016TheMD}.
    %the claims provided in Theorem \ref{thm:cascadeatk} and Theorem \ref{thm:pbmatk} on the real-world Movielens dataset \cite{Harper2016TheMD}.
    
%\end{enumerate}
\vspace*{-0.5em}
\subsection{Related Work}
\vspace*{-0.5em}
Numerous recent studies have explored adversarial attacks on classical multi-armed bandit algorithms \cite{Jun2018AdversarialAO,NEURIPS2021_be315e7f,Liu2019DataPA,Wang2024StealthyAA,Rangi2021SavingSB}. However, adversarial attacks on OLTR \cite{10.5555/3666122.3667927,wang2024adversarial} remain relatively unexplored.

While most of the attacks on MABs could potentially be extended to OLTR, the combinatorial action space and restricted feedback structure in OLTR make these extensions non-trivial \cite{10.5555/3666122.3667927}. To the best of our knowledge, only two studies have investigated adversarial attacks on OLTR algorithms. The first one \cite{10.5555/3666122.3667927}, introduced a reward manipulation attack on $\cascadeucb$ and $\pbmucb$, demonstrating that a target item could be recommended for \(T - o(T)\) rounds with only \(o(T)\) corruptions. A key limitation of this attack was its reliance on observing rewards and performing costly computations in every round. The second study \cite{wang2024adversarial} proposed an attack-then-quit (ATQ) strategy for OLTR algorithms based on item elimination. The study also discusses the hindrances in applying ATQ-like attacks to UCB-based OLTR algorithms as one of its major drawbacks.
\vspace*{-0.5em}
\section{Setting}
\label{sec:stoch}
\vspace*{-0.5em}
In this section, we discuss our problem setting for designing an observation-free attack on OLTR algorithms. For the ease of notation, we define the following quantities.  Let the universal set of all available items be \(\totset = \{1, 2, \dots, \ntot\}\); at each time-step (round) \(t\), the user is presented with an ordered list \(\reclist_t = (a_{1,t}, \dots, a_{\nreco,t})\) of \(\nreco\) items selected from \(\totset\),  where \(\nreco \leq \ntot \). Here, \( a_{i,t} \) is the item present at the \( i^{\mathrm{th}} \) position of \( \reclist_t \), i.e., $ \reclist_t[i]= a_{i,t}$. The user examines \( \reclist_t \) and provides feedback to the OLTR algorithm through clicks, following a specified click feedback model. 

\vspace*{-0.5em}
\subsection{Click Feedback Models}
\vspace*{-0.5em}
This work primarily focuses on two of the most common click feedback models for OLTR: the Cascade model \cite{cascademod} and the Position-based model  \cite{chuklin2015click}. In both models, the items are characterized by attraction probabilities \(\wts \in [0, 1]^{\ntot}\). When a user examines an item \(a=a_{i,t}\) located at position \(i\) in list \(\reclist_t\) during round \(t\), the probability of clicking on it is given by \(\wts_{a}\), which is independent of previous rounds and the attraction probability of other items. To formally characterize the user feedback, we define the following quantities: 
\begin{enumerate}[left=2pt]
 \item[--] \textit{Examination Feedback}: $\exam_t \in \{0,1\}^{\nreco}$, with $\exam_{i,t}=1$, if and only if  $a_{i,t}$ is examined in round $t$.
\item[--] \textit{Click Feedback}: $\click_t \in \{0,1\}^{\nreco}$, where $\click_{i,t}=1$, if and only if $a_{i,t}$ is clicked in round $t$.
\end{enumerate}
Using the above-defined quantities, we now describe the following click feedback models in detail.
\vspace*{-0.5em}
\subsubsection{Cascade Model}
\vspace*{-0.5em}
\label{subsec:casmodel}
The Cascade model \cite{cascademod} is one of the earliest and most well-studied user feedback models. In this model, the user examines  \(\reclist_t\) sequentially from top to bottom, selecting the first item they find \textit{attractive}, which, in the context of web search, is indicated by a click. Once an item \(a_{i,t}\) is clicked, the user concludes the search, leaving all the subsequent items unexamined for that round. Conversely, if \(a_{i,t}\) fails to attract the user, they proceed to examine the next item \(a_{i+1,t}\), and continue to do so until the first attractive item is found or $\reclist_t$ is completely examined.   The Cascade model assumes that only a single item can be clicked in any given round.
\vspace*{-0.5em}
\subsubsection{Position-based Model (PBM)}
\label{subsec:pbm}
\vspace*{-0.5em}
The \textit{Position-based model} \cite{chuklin2015click} serves as a prominent alternative to the Cascade model for simulating user behavior in OLTR. Unlike the Cascade model, where the items are sequentially examined, the PBM suggests that the likelihood of examination is influenced by the \textit{position bias}, captured by $\poslist = \left(p_1, \dots, p_\nreco\right)$. Specifically, an item appearing in the $i^\mathrm{th}$ position of $\reclist_t$ is examined with a probability $p_i$, independently of the other positions and their corresponding items. The examination probabilities are assumed to be constant through time and decreasing as the position moves down the list, with $p_1 \geq p_2 \geq \dots \geq p_\nreco$.

In PBM, for item \( a=a_{i,t} \), the click feedback is given by
    $\click_{i,t} = \exam_{i,t} \cdot Y_{a,t}$, where $\exam_{i,t}\sim \mathrm{Bernoulli}(p_i)$ represents the examination feedback of  \(a_{i,t} \) in round t, while $Y_{a,t}\sim \mathrm{Bernoulli}(\wts_{a})$ reflects the clicking of item $a_{i,t}$, conditioned on it being examined. A key difference between the PBM and the Cascade model is that PBM allows for multiple items in $\reclist_t$ to be clicked within a single round. 

\color{black}
\vspace*{-0.5em}
\subsection{OLTR with Click Feedback}
\label{sec:casbandit}
\vspace*{-0.5em}
For real-world use cases, the attractiveness of the items is often unknown, and the recommenders have to learn them from the user feedback, which brings us to the problem of online learning to rank. As per the OLTR setup, at each time step \( t \), the learning agent recommends a list \( \reclist_t \) to the user and observes the corresponding feedback. If a user clicks on the item \( a = a_{i,t} \),  i.e.,  \( \click_{i,t} = 1 \),  the agent receives a click feedback for the position, and a reward \( \rew_{a,t} = 1 \) for the item. If an item is not clicked, the agent receives \( \rew_{a,t} = 0 \). Therefore, in the absence of any external manipulations, {\[\displaystyle
    \rew_{a,t} = \sum^\nreco_{i=1}  \click_{i,t} \cdot \mathds{1}{\left\{a=a_{i,t}\right\}}.
\]}
Given the attraction probabilities 
$w$, the expected reward in round  $t$ is denoted by $f(\reclist_t,w)$, which equals the expected number of clicks received by the given list:{
\[
f(\reclist_t,w)=\mathds{E}\left[\sum_{a \in \reclist_t }\rew_{a,t}\right]= \mathds{E}\left[\sum_{a \in \reclist_t }\sum^\nreco_{i=1}  \click_{i,t} \cdot \mathds{1}{\left\{a=a_{i,t}\right\}}\right]= \mathds{E}\left[ \lVert\click_{t}\rVert_1\right].
\]}

% \noindent which under the Cascade model becomes
% \begin{equation}
% \label{eq:f_cas}
%  f_{\mathrm
%  cas}(\reclist_t,w)= 1- \prod_{i = 1}^\nreco (1 - \wts_{a_{i,t}}),
%  \end{equation}
% while for the PBM with position bias $\poslist=\left(p_1,\dots, p_\nreco\right)$, the net expected reward is
% \begin{equation}
% \label{eq:f_pbm}
% f_{\mathrm{PBM}}(\reclist_t,w)= \sum_{i = 1}^\nreco \wts_{a_{i,t}}p_i.
%  \end{equation}
%  A key difference between the reward functions is that while \( f_{\mathrm{cas}}\) is independent of the ordering of the elements in $\reclist_t$, this does not hold for \( f_{\mathrm{PBM}} \). Therefore, under the Cascade model
% \begin{equation}
% \label{eq:f_perm}
% f_{\mathrm{cas}}(\reclist,\wts) =  f_{\mathrm{cas}}(\reclist_t,\wts) \; \forall\; \reclist \in \pmst(\reclist_t), \; \;\pmst(\reclist_t) =\{\reclist \;|\; \reclist\text{ is a permutation of }\reclist_t\}.
% \end{equation}

At each round, the learning agent aims to maximize its expected reward by recommending the most attractive items as a part of $\reclist_t$. Without loss of generality, let the items in \( \totset \) be indexed in decreasing order of their attraction probabilities. We define \( \reclist^\ast = (1, \dots, \nreco) \) as the list of the \( \nreco \) most attractive items, ordered by descending attractiveness. An optimal static policy, recommending $\reclist^\ast$ to the user at all rounds, maximizes the expected reward under both the click models. Any learning policy is evaluated on the \textit{cumulative expected regret} up to horizon $T$, which is given by {
\[
  \regret(T) = \mathds{E}\left[ \sum_{t = 1}^T \reglist(\reclist_t, \wts)\right], \text{ where $ \reglist(\reclist_t, \wts)= f(\reclist^\ast,\wts)- f(\reclist_t,\wts)$.}
\]}
 Over the past few years, various algorithms \cite{kveton15cascade,10.5555/2969239.2969401,PBM-UCB,10.5555/3305890.3306115,Lattimore2018TopRankAP} have been proposed for OLTR with click feedback. Details for two such UCB-based OLTR algorithms, namely $\cascadeucb$ (Algorithm \ref{alg:cascadeucb1}) for the Cascade model and  $\pbmucb$ (Algorithm \ref{alg:pbmucb}) for PBM, are provided in Appendix \ref{App:Algos}.
\vspace*{-0.5em}
\subsection{Attacking OLTR Algorithms}
\vspace*{-0.5em}
Since OLTR algorithms optimize exclusively for click feedback, they often overlook items that are less likely to attract clicks, regardless of their other attributes. Our study examines strategies that a group of adversarial users could employ to promote such less attractive items in OLTR. We focus on a class of adversarial strategies known as \textit{observation-free attacks} \cite{NEURIPS2021_be315e7f}, in which users manipulate their reward vector $\rew_t = \{\rew_{a,t}\}_{a \in \totset}$ into $\hat{\rew}_t$ without access to any feedback or rewards from other users (i.e., $\exam_t,\, \click_t,\, \rew_t$ for $t \in \{1,2\dots,T\}$), in order to promote their target items.
% We consider an attack strategy on an OLTR algorithm to be \textit{successful} if it can misguide the algorithm into recommending all the target items as a part of \( \reclist_t \) for \( T - o(T) \) rounds, with a probability greater than $1-o(1)$. This notion of success has also been used in \cite{wang2024adversarial}.

\vspace*{-0.5em}\section{Observation-Free Attack Strategies for OLTR}
\label{sec:genattack}
\vspace*{-0.5em}
Our attack strategies for OLTR are inspired by the attack given in \cite{NEURIPS2021_be315e7f} for the classical MAB algorithms. Given a set $\tarset$  of target items containing $N \leq \nreco$ items, we propose a three-phase attack framework. The attack is initialized with a list $\tarlist$ that includes all items in $\tarset$, together with $\nreco - N$ items arbitrarily selected from $\totset \setminus \tarset$, and it begins at the very start of the learning process. In the first phase, lasting $\ca$ rounds, the collective adversary transmits zero rewards for all items. In the second phase, for the next $\cb$ rounds, it assigns positive rewards to a subset of items in $\tarlist$ while forcing zero rewards for the rest. In the third phase, the adversary applies no further manipulations.

\begin{wrapfigure}{r}{0.5\linewidth}
\vspace{-2.5em}
    \begin{minipage}{\linewidth}
        \begin{algorithm}[H]
        \caption{$\ofcs$}
        \label{alg:observation_free_cascade}
        \begin{algorithmic}
        \State {\bfseries Input:} Horizon $T$, Item set $\totset$, and Target set $\tarset$.
        \State \textbf{Initialize} list $\tarlist$ containing all elements in $\tarset$. 
        \State \textbf{Calculate} $\pmy$, $\ca$ and $\cb$  as per Section \ref{subsec:cas-skel}. 
        \For{$t=1 \dots T$}
            \If{$t \leq \ca$}
                \State $\hat{\rew}_{a,t} = 0 \; \forall a \in \totset$
            \ElsIf{$\ca < t \leq \ca+\cb$}
                \State $i = \left\lceil \frac{\nreco(t-\ca)}{\cb} \right\rceil$
                \State Set $\hat{\rew}_{a,t} = \mathds{1}{\{a = \tarlist[i],a\in \reclist_t\}}, \; \forall\, a$
            \Else
                \State $\hat{\rew}_t = \rew_t$
            \EndIf
        \EndFor
        \end{algorithmic}
        \end{algorithm}
    \end{minipage}
    \vspace{-2em}
\end{wrapfigure}
 The first phase of the attack forces all items in $\totset$ to have low empirical reward estimates, while the second phase ensures that the items in $\tarlist$ are brought into $\reclist_t$ and are significantly differentiated from the rest.
 The attack actively occurs only in the first $\ca+\cb$ rounds, while ensuring with  high probability that the OLTR algorithm recommends a permutation of $\tarlist$ to the users in all the subsequent rounds until horizon $T$. Assuming $\ca+\cb \ll T$, such an attack can be conceptualized as the rapid injection of misleading feedback by a group of adversarial users at the start of the learning process.

   We implement the aforementioned framework for designing attacks on two UCB-based OLTR algorithms: $\cascadeucb$ \cite{kveton15cascade} and $\pbmucb$ \cite{PBM-UCB}. UCB-based OLTR algorithms are known to be quite resistant to such limited-time attack strategies \cite{wang2024adversarial}. This resistance arises because, once the adversary stops, the true feedback for other items is gradually revealed to the algorithm, allowing the learner to recognize that the targeted items are not the most attractive ones. To prevent this, by the end of the second phase, our framework ensures that all the non-target items have sufficiently low UCBs, and thus, with a high probability,  they do not receive any further examinations (and clicks) even after the manipulations conclude.

   % The UCB-based OLTR algorithms both require an exploration parameter ($\alpha$), described in detail in Appendix \ref{App:Algos}. {\color{red} Unless otherwise stated, we use $\alpha=1.5$, following \cite{kveton15cascade}.(TO DISCUSS)}

\vspace*{-0.5 em}
\subsection{Attacking CascadeUCB1}
\vspace*{-0.5 em}
\label{subsec:casattackucb}
To efficiently attack the $\cascadeucb$ algorithm and promote items from a given target set $\tarset$, we propose the 
$\ofcs$ strategy outlined in Algorithm \ref{alg:observation_free_cascade}, with parameters specified in Section \ref{subsec:cas-skel}. The analytical results for this strategy are presented in Theorem \ref{thm:cascadeatk}.
\vspace*{-0.5 em}
\subsubsection{CascadeOFA: Skeleton and Details}
\vspace*{-0.5 em}
$\ofcs$ requires initializing a list $\tarlist$ of length $\nreco$ that contains all the items of $\tarset$. Following this, we define an attack parameter $\pmy$, which is used to specify the phase durations, with
\[ 
\pmy  = (1-\epsilon) \min\left\{\frac{1}{\nreco},w_{\mathrm{min}}\right\}, \text{where  $\epsilon \in (0,1)$ and $\displaystyle w_{\mathrm{min}}= \min_{a \in \tarlist} \wts_a$. }
\] 
Note that  \( \ofcs \) does not require knowledge of the exact values of $\wts_a$'s; any $\pmy$ smaller than \(\min\left\{1/\nreco, w_{\mathrm{min}}\right\} \) is sufficient to launch the attack. An observation-free attack on $\cascadeucb$ with a parameter $\alpha>1$ (described in Remark \ref{rem:alpha}) proceeds in the following manner:

\label{subsec:cas-skel}
 \textbf{Phase 1.} Set $\hat{\rew}_{a,t}=0 \;\forall\; a \in \totset$ for the first $\ca$ rounds, where 
$\ca = \ntot\left\lceil \frac{ \alpha \log T } {\nreco\psq}  \right\rceil \;.$

\noindent\textbf{Phase 2.} The second phase lasts for $\cb$ rounds with \(
\cb = \nreco\left\lceil \frac{\pmy \nreco\ca/\ntot + \ntot-\nreco+1}{1-\nreco\pmy} \right\rceil\;.\)

Split this phase into $\nreco$  sub-phases of equal length with the $i^{\text{th}}$ sub-phase lasting from round $\ca+ (i-1)\cb/{\nreco}+1$ to $\ca+{i\cb}/{\nreco}$. In the $i^\text{th}$ sub-phase, enforce $\hat{\rew}_{a,t}= \mathds{1}{\{a = \tarlist[i],a\in \reclist_t\}}$.

%\color{black} The proof of this lemma is given in Section \ref{subsec: cb_cascade}. 

\noindent\textbf{Phase 3.}  No further reward manipulation is applied in the remaining rounds of $\cascadeucb$.
\subsubsection{Theoretical Analysis}
\label{sec:theo-cas}
\begin{theorem}
\label{thm:cascadeatk} 
If a collective adversary attacks $\cascadeucb$ using the $\ofcs$ strategy outlined in Algorithm \ref{alg:observation_free_cascade}, then with $O(\log T)$ reward manipulation, it can ensure that each item $\arm \in \tarset$ is recommended for at least $T - O(\log T)$ rounds, with probability exceeding $1 - \frac{\nreco}{T}$, thereby imposing $\regret(T) = \Omega(T)$ (linear regret) on $\cascadeucb$.
\end{theorem}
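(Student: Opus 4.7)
The plan is to analyze the three phases of $\ofcs$ in sequence and argue that, with probability at least $1 - \nreco/T$, by the end of Phase~2 every item in $\totset \setminus \tarlist$ has a UCB strictly below the empirical reward of every item in $\tarlist$, so that $\cascadeucb$ is locked into recommending a permutation of $\tarlist$ for every remaining round up to horizon $T$. Once this event is secured, the bound $\ca + \cb = O(\log T)$ immediately gives $T - O(\log T)$ target recommendations and, since $\tarlist$ is suboptimal by construction (it contains at least one item outside the top-$\nreco$ most attractive set), each such round contributes a constant gap $\reglist(\reclist_t, \wts) = \Omega(1)$, yielding $\regret(T) = \Omega(T)$.

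For Phase~1 the adversary sets $\hat{\rew}_{a,t} = 0$ for every $a$, so from $\cascadeucb$'s viewpoint no position is ever clicked and hence every position is examined; each item in $\totset$ therefore accumulates observations of zero. A standard UCB balancing argument shows that by round $\ca$ every item has been examined at least $\lceil \alpha \log T / \pmy^2 \rceil$ times, so each has empirical mean exactly $0$ and UCB at most $\pmy$, which lies strictly below $\wts_a$ for every target by the definition $\pmy < w_{\mathrm{min}}$.

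For Phase~2 I proceed by induction on the sub-phase index $i$. The inductive hypothesis is that after sub-phase $i-1$ the items $\tarlist[1], \dots, \tarlist[i-1]$ have empirical mean close to $1$ and hence occupy the top $i-1$ slots of $\reclist_t$ throughout sub-phase $i$, while $\tarlist[i]$ still has UCB at least $\pmy$ and therefore sits inside the top-$\nreco$. The adversary's rule $\hat{\rew}_{a,t} = \mathds{1}\{a = \tarlist[i], a \in \reclist_t\}$ registers $\tarlist[i]$ as the unique click; on each such round $\tarlist[i]$ gains a unit reward, every item above it in $\reclist_t$ records a zero observation, and items below go unobserved. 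The length $\cb/\nreco$ per sub-phase is chosen precisely so that, after all $\nreco$ sub-phases are complete, each $\tarlist[j]$ has empirical mean comfortably above $\pmy$ while every non-target $b$ has accumulated enough zero observations (counting Phase-1 and Phase-2 contributions) to drive its UCB below $\pmy$; the algebraic form of $\cb$ solves exactly this count balance in the worst case, and the constraint $\pmy < 1/\nreco$ is what keeps $1 - \nreco \pmy$ positive.

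The hardest step, and the principal departure from the classical MAB attack of~\cite{NEURIPS2021_be315e7f}, lies in the cascade observation structure of Phase~2: items below the clicked position are invisible, so a non-target need not be observed every time it appears in $\reclist_t$. I would handle this by a worst-case charging argument that bounds the zero observations gained by each non-target by combining its Phase-1 count with the rounds of Phase~2 in which it appears strictly above the active target $\tarlist[i]$; the ``$\ntot - \nreco + 1$'' slack in the numerator of $\cb$ absorbs the rounds in which $\tarlist[i]$ has not yet overtaken all non-targets in UCB order. A Chernoff plus union bound over $t \in \{1,\dots,T\}$ then secures the required UCB concentration events at confidence $1 - \nreco/T$, after which Phase~3 is self-sustaining and the stated regret bound follows.
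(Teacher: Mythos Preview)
Your three-phase decomposition mirrors the paper's, and the Phase~1 sketch is essentially right. But you misplace the probabilistic argument and leave the decisive Phase~3 step unproved. Phases~1 and~2 of $\ofcs$ are \emph{deterministic}: the adversary fixes every reward, so no Chernoff bound is needed there, and the end-of-Phase-2 configuration ($U_{a'}\le\pmy$ for $a'\notin\tarlist$, $\hat\wts_{a}\ge\pmy$ for $a\in\tarlist$) holds with probability one. Your inductive hypothesis for Phase~2 is also slightly off: at the start of sub-phase $i$, item $\tarlist[i]$ has empirical mean zero and is indistinguishable from the non-targets, so it need \emph{not} sit in the top-$\nreco$; the $\ntot-\nreco+1$ slack is there precisely to pay for the round-robin search until $\tarlist[i]$ first appears in $\reclist_t$, not merely to ``overtake non-targets in UCB order.''

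The genuine gap is your claim that the end-of-Phase-2 configuration implies $\cascadeucb$ is ``locked in'' and Phase~3 is ``self-sustaining.'' It does not: in Phase~3 the targets receive true stochastic rewards with mean $\wts_a<1$, so their empirical means can fall, and you must show that nevertheless $U_a(t)>\pmy$ for every target and every $\ca+\cb<t\le T$. This is where the probability enters. The paper's argument (its Lemma~\ref{lemma:eff_cas}) applies Hoeffding to the Phase-3 clicks: with $m$ examinations from Phases~1--2 and $n$ from Phase~3, one has with probability at least $1-1/T$ that $U_a(t)\ge \frac{m\pmy+n\wts_a-\sqrt{n\log t}}{m+n}+\sqrt{\frac{\alpha\log t}{m+n}}>\pmy$, using $\wts_a>\pmy$ and $\alpha>1$. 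A union bound over the $\nreco$ targets then gives the $1-\nreco/T$ guarantee. Meanwhile the non-targets, never re-examined, keep $U_{a'}(t)\le\sqrt{\alpha\log T/\pulls_{a'}(\ca)}\le\pmy$ for all $t\le T$. Without this Phase-3 concentration step your lock-in claim is unjustified, and the $1-\nreco/T$ probability in the theorem has no source.
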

A detailed proof of Theorem \ref{thm:cascadeatk} is provided in Appendix \ref{app:proof-cas}.

\vspace*{-0.5 em}
\subsection{Attacking PBM-UCB}
\vspace*{-0.5 em}
\label{subsec:pbmatkucb}
\begin{wrapfigure}{r}{0.5\linewidth}
\vspace{-2.5em}
    \begin{minipage}{\linewidth}
        \begin{algorithm}[H]
        \caption{$\ofpb$}
        \label{alg:observation_free_pbm}
        \begin{algorithmic}
        \State {\bfseries Input:} Position bias $\poslist$, Horizon $T$, Item set $\totset$, Target set $\tarset$.
        \State \textbf{Initialize} list $\tarlist$ containing all elements in $\tarset$. 
        \State \textbf{Calculate} $\ca$ and $\cb$ as per Section \ref{subsec:pbm-skel}.
        \For{$t=1 \dots T$}
            \State $\hat{\exam}_{i,t} \sim \mathrm{Bernoulli}(p_i)\; \forall i = 1,\dots, \nreco$
            \If{$t \leq \ca$}
                \State $\hat{\rew}_{a,t} = 0 \; \forall a \in \totset$
            \ElsIf{$\ca < t \leq \ca+\cb$}
                \State $\displaystyle \hat{\rew}_{a,t} =\sum^\nreco_{i=1} \hat{\exam}_{i,t} \cdot \mathds{1}{\{a = a_{i,t},a \in \tarlist\}}$
            \Else
                \State $\hat{\rew}_t = \rew_t$
            \EndIf
        \EndFor
        \end{algorithmic}
        \end{algorithm}
    \end{minipage}
    \vspace{-4.5em}
\end{wrapfigure}
Following the approach of $\ofcs$, we propose the $\ofpb$ strategy outlined in Algorithm \ref{alg:observation_free_pbm} for attacking $\pbmucb$ with exploration parameter $\alpha>1$ (Remark \ref{rem:alpha}) and position bias \(\poslist = (p_1, \dots, p_\nreco)\);  we define $\lambda_p =p_1/p_\nreco$.

At each round of the attack, the adversary generates $\hat{\exam}_t\in \{0,1\}^\nreco$ with $\hat{\exam}_{i,t} \sim \mathrm{Bernoulli}(p_i)$ to examine $\reclist_t$ according to PBM. 

Similar to $\ofcs$, this attack also requires a parameter $\pmy\in$ \((0,\min\left\{1/\lambda_p, \wts_{\mathrm{min}}\right\}) \). Owing to the unique characteristics of position-based models, the computation of $\ca$ and $\cb$ differs markedly from that for the Cascade model, as detailed in Section \ref{subsec:pbm-skel}. The consequences of the $\ofpb$ strategy on $\pbmucb$ are presented in Theorem \ref{thm:pbmatk}.

\vspace*{-0.5 em}
\subsubsection{PBMOFA: Skeleton and Details}
\vspace*{-0.5 em}
\label{subsec:pbm-skel}
%\vspace{1em}
 \noindent\textbf{Phase 1.} Set $\hat{\rew}_{a,t}=0 \;\forall\; a \in \totset$ for the first $\ca$ rounds where \(
      \ca= \left\lceil\frac{L }{\nreco} \left\{\frac{\lambda^2_p\alpha\log T}{\psq p^2_\nreco} + 1\right\}   \right\rceil.\)

\noindent \textbf{Phase 2.} Enforce $\hat{\rew}_{a,t}= \sum^\nreco_{i=1} \hat{\exam}_{i,t} \cdot  \mathds{1}{\left\{a=a_{i,t},a \in \tarlist\right\}}$, for the next $\cb$ rounds where
{ \small\[
  \cb=\left\lceil\frac{L \left(\lambda^2_p\gamma\log T + 1\right)}{\nreco}  \right\rceil, \text{with }  
\gamma=\frac{2\rho\eta+1 + \sqrt{4\rho\eta+1} }{2\eta^2},\,
\rho= p_1\pmy\left\{\frac{\nreco\ca}{\log T}-\frac{(L-1)\alpha}{\psq p^2_\nreco}\right\},
\]} and 
$\eta= p_\nreco-p_1\pmy.$ 

\noindent \textbf{Phase 3.} The third phase of $\ofpb$ proceeds  without any further reward manipulation.

In contrast to the earlier reward-manipulation attacks on $\pbmucb$ \cite{10.5555/3666122.3667927,wang2024adversarial}, $\ofpb$ explicitly incorporates the probabilistic examination structure of the Position-based model while generating $\hat{\exam}_t$.
\vspace*{-1.5em}
\subsubsection{Theoretical Analysis}
%\vspace*{-0.5em}
\label{sec:theo-pbm}
%\vspace*{-0.5em}
\begin{theorem}
\label{thm:pbmatk} 
If a collective adversary attacks $\pbmucb$ using the $\ofpb$ strategy outlined in Algorithm \ref{alg:observation_free_pbm}, then with $O(\log T)$ reward manipulations, it can ensure that each item $\arm \in \tarset$ is recommended to the users for at least $T - O(\log T)$ rounds, with probability exceeding $1 - \frac{2\nreco}{T}$, thereby imposing $\regret(T) = \Omega(T)$ (linear regret) on $\pbmucb$.
\end{theorem}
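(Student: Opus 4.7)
My plan is to follow the three-phase decomposition used for the proof of Theorem \ref{thm:cascadeatk}, adapting each step to accommodate the position-bias-weighted estimator of $\pbmucb$ and the additional randomness introduced by the simulated examinations $\hat{\exam}_t$. I would first introduce a high-probability ``good event'' $\mathcal{E}$ on which (i) the empirical attractiveness $\hat{w}_{a,t}$ maintained by $\pbmucb$ lies within its confidence radius of its conditional mean for every item $a\in\totset$ and every round $t\le T$, and (ii) the adversary's simulated rewards $\hat{\rew}_{a,t}$ concentrate around their conditional means during Phase 2. Both families of deviations are sub-Gaussian, so a Hoeffding bound combined with a union bound over the $\nreco$ target items yields the $1-2\nreco/T$ failure probability stated in the theorem.

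On $\mathcal{E}$, Phase 1 is analyzed as follows. Since every manipulated reward is zero, each item's estimator equals $0$, so $\pbmucb$ selects purely on the basis of confidence radii; a round-robin-style spread argument, together with the fact that every position has examination probability at least $p_\nreco$, shows that after $\ca$ rounds every item has accumulated enough position-weighted observations that its UCB is bounded above by $\pmy\, p_\nreco / \lambda_p$. The particular factor $\lambda_p^{2}/p_\nreco^{2}$ appearing inside $\ca$ is calibrated precisely to deliver this threshold. In Phase 2, for each target item $a\in\tarlist$ displayed at position $i$ the adversary injects a reward with conditional mean $p_i$, driving $\hat{w}_a$ toward $1$, while non-target items keep receiving zero. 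A monotone argument then shows that once a target item overtakes a non-target item in UCB, $\pbmucb$ continues to prefer the target, so target items collectively accrue $\Theta(\cb)$ position-weighted observations. The explicit formula for $\cb$, with $\rho$, $\eta$, and $\gamma$ arising from the quadratic inequality that enforces the LCB of every target above the UCB of every non-target, is chosen so that at the end of Phase 2 this ordering holds on $\mathcal{E}$.

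Phase 3 is the most delicate step and where I expect the main technical obstacle. Once the adversary withdraws, the target items begin receiving true rewards with mean $\wts_a$, so $\hat{w}_a$ slowly drifts from near $1$ toward $\wts_a \ge w_{\mathrm{min}}$. The induction hypothesis for $t>\ca+\cb$ is that the ordering $\mathrm{LCB}(a) > \mathrm{UCB}(b)$ for every $a\in\tarlist$, $b\notin\tarlist$ persists; under this hypothesis non-target items are never selected, so their UCBs are frozen at the Phase 1 bound $\pmy\, p_\nreco/\lambda_p$, while the target estimate stays above $w_{\mathrm{min}}$ minus an $O(1/\sqrt{t})$ confidence term on $\mathcal{E}$. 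This is exactly where the restriction $\pmy < \min\{1/\lambda_p,\, w_{\mathrm{min}}\}$ becomes essential: $1/\lambda_p$ guarantees that the non-target UCB bound stays below the simulated target mean of $1$ throughout Phase 2, while $w_{\mathrm{min}}$ guarantees that the ordering survives the transition to true rewards in Phase 3. Combining the three phases, the manipulations are confined to the first $\ca+\cb = O(\log T)$ rounds, and every target item is recommended in every subsequent round on $\mathcal{E}$, giving both the $T-O(\log T)$ recommendation guarantee and, since $\tarset$ is disjoint from $\reclist^\ast$, the $\Omega(T)$ regret bound.
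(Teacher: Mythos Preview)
Your three-phase skeleton matches the paper's proof of Theorem~\ref{thm:pbmatk} (Lemmas~\ref{lemma:c1_pbm}--\ref{lemma:eff_pbm}), and the way you allocate the $2\nreco/T$ failure probability between the Phase-2 click concentration and the Phase-3 concentration of true rewards is exactly what the paper does. Two points, however, are off.

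First, the separating threshold is $\pmy$, not $\pmy\,p_\nreco/\lambda_p$. During Phase~1 the click count is zero, so $U_a(t)=\sqrt{\alpha\,\reco_a(t)\log t}/\tilde{\pulls}_a(t)\le \sqrt{\alpha\log t/(p_\nreco^{2}\reco_a(t))}$; forcing this below $\pmy$ needs $\reco_a\ge \alpha\log T/(p_\nreco^{2}\pmy^{2})$, and the extra factor $\lambda_p^{2}$ in $\ca$ compensates for the recommendation imbalance established in Lemma~\ref{lemma:round-robin-pbm}, not for a smaller threshold. With the stated $\ca$ your bound $\pmy\,p_\nreco/\lambda_p$ is simply not attained. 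Relatedly, Phase~1 is \emph{deterministic} in the paper (Lemma~\ref{lemma:c1_pbm} has probability one), since $\tilde{\pulls}_a$ depends only on the recommendation history and that history is fixed once all rewards are zero; it should not be folded into your good event $\mathcal{E}$.

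Second, the paper never invokes an LCB. Its invariant is the single inequality $U_a(t)>\pmy$ for $a\in\tarlist$: in Phase~2 this follows because the empirical ratio $\nclk_a/\tilde{\pulls}_a$ alone exceeds $\pmy$ (the quadratic giving $\gamma$, $\rho$, $\eta$ is exactly \eqref{eq:st_pbm}, and the constraint $\pmy<1/\lambda_p$ is there to make $\eta=p_\nreco-p_1\pmy>0$, not for the reason you state); in Phase~3 the same inequality persists because $\wts_a>\pmy$ and $\alpha>1$ force the Hoeffding-corrected estimate plus confidence radius above $\pmy$. Non-target UCBs, receiving no clicks in either phase, can only decrease and so stay $\le\pmy$. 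Your LCB-versus-UCB induction would also succeed, but it proves a stronger and unnecessary statement. Finally, $\Omega(T)$ regret only requires $\tarlist$ to contain at least one item outside $\reclist^\ast$; you need not assume $\tarset\cap\reclist^\ast=\emptyset$.
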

A detailed proof of Theorem \ref{thm:pbmatk} is provided in Appendix \ref{app:proof-pbm}.
\vspace*{-0.5em}
\section{Empirical Results}
\vspace*{-0.5em}
\label{sec:emp}
To support our theoretical analysis, we conduct experiments on the MovieLens dataset \cite{Harper2016TheMD}, containing ratings for about 3,900 movies. Following \cite{Zong2016CascadingBF}, we assign a reward of one when a user rates a movie above three stars. For a given $\ntot$, $\totset$ is created by arbitrarily choosing $\ntot$ movies from the dataset.

In all experiments, we set $T = 5\times10^5$, $\nreco = 3$, $\ntot = 10$, and average results over 50 runs. For PBM, we use $\poslist = (0.95, 0.90, 0.85)$. With $\tarlist = (4,7,10)$ for $\ofcs$ and $\tarlist = (8,9,10)$ for $\ofpb$, we obtain $\pmy \approx 0.08$ for both strategies. Under these parameters, the required reward manipulation $(\ca+\cb)$ is $11265$ for $\ofcs$ and $12811$ for $\ofpb$, both well under $3\%$ of the total number of rounds. To evaluate the effectiveness of our attacks, we compare the number of recommendations for the items in $\reclist^\ast=(1,2,3)$ and $\tarlist$, with and without manipulation.

Figure \ref{fig:success} presents the results for $\ofcs$ and $\ofpb$; in the absence of any manipulation, both $\cascadeucb$ and $\pbmucb$ recommend the items in $\reclist^\ast $ for most rounds. In contrast, with our attacks, the target items receive the maximum recommendations for both the OLTR algorithms. 

Next, we compare the performance of $\ofcs$ with the following baselines:
\vspace*{-0.5em}
\begin{enumerate}[left=2pt]
    \item \textbf{No Attack}: The $\cascadeucb$ algorithm without any external manipulation.
    \item \textbf{CascadeATQ}: A naive attack-then-quit strategy, introduced by  \cite{wang2024adversarial}, with the same amount of reward manipulation as $\ofcs$. For $t \in \{1,\dots\ca+\cb\}$, the adversary clicks $\left(\hat{\rew}_{a,t}=1\right)$ the top-most target item in $\reclist_t$; if no such items exist, it ignores $\left(\hat{\rew}_{a,t}=0\right)$ all the items in $\reclist_t$.
    \item \textbf{CascadeAlphaAtk} (Algorithm 4 in \cite{10.5555/3666122.3667927} ): First  reward-manipulation attack on $\cascadeucb$. 
    %Unlike $\ofcs$ and CascadeATQ, this attack is neither observation-free nor limited-time. 
\end{enumerate}
\vspace*{-0.5em}
\begin{figure}[h]
\centering
\begin{subfigure}[h]{0.49\linewidth}
    \centering
    \includegraphics[width=\linewidth]{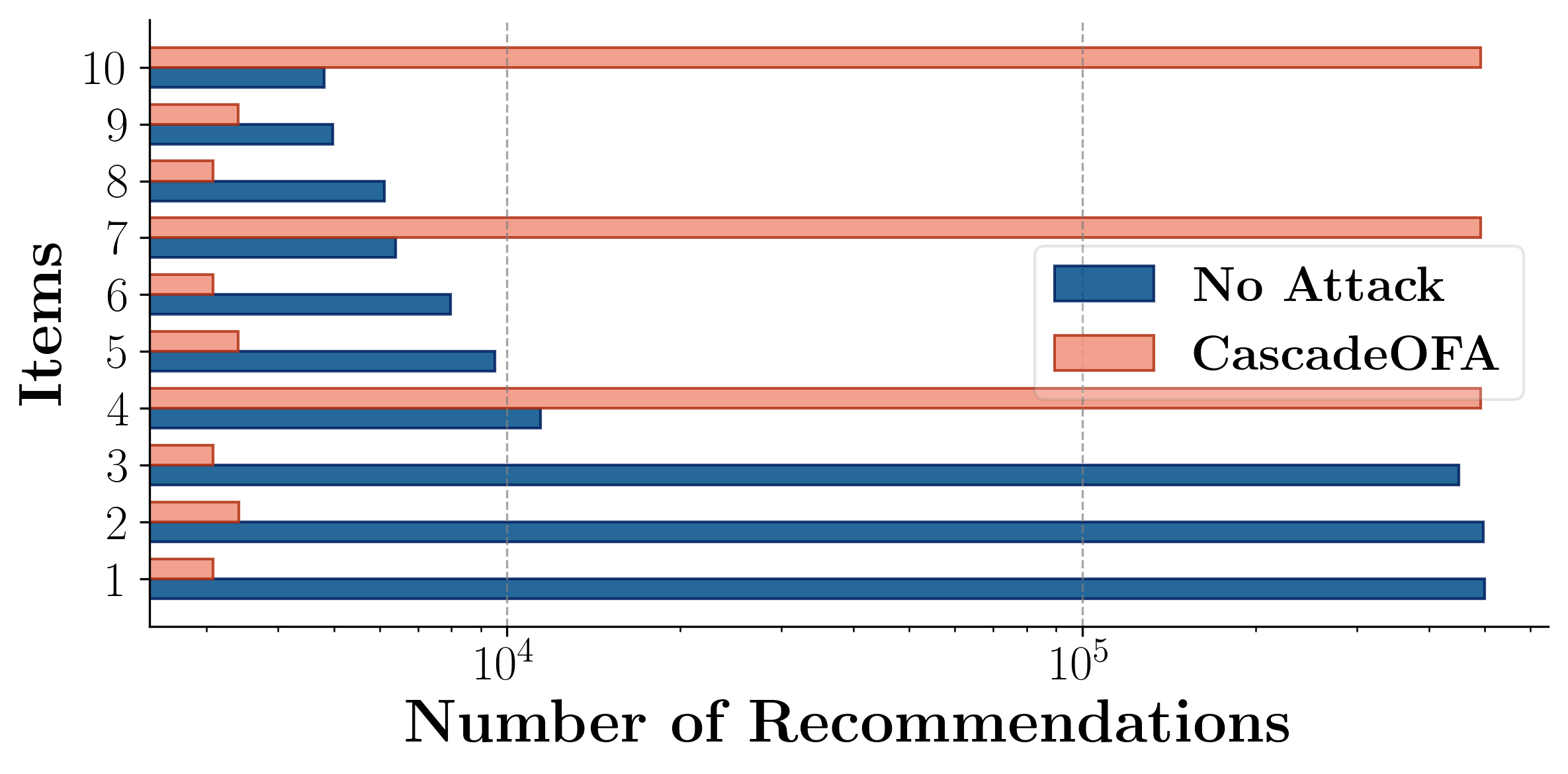}
    \subcaption{$\ofcs$ with $\tarlist=(4,7,10)$.}
    \label{fig:success_cas}
\end{subfigure}
\hfill
\begin{subfigure}[h]{0.49\linewidth}
    \centering
    \includegraphics[width=\linewidth]{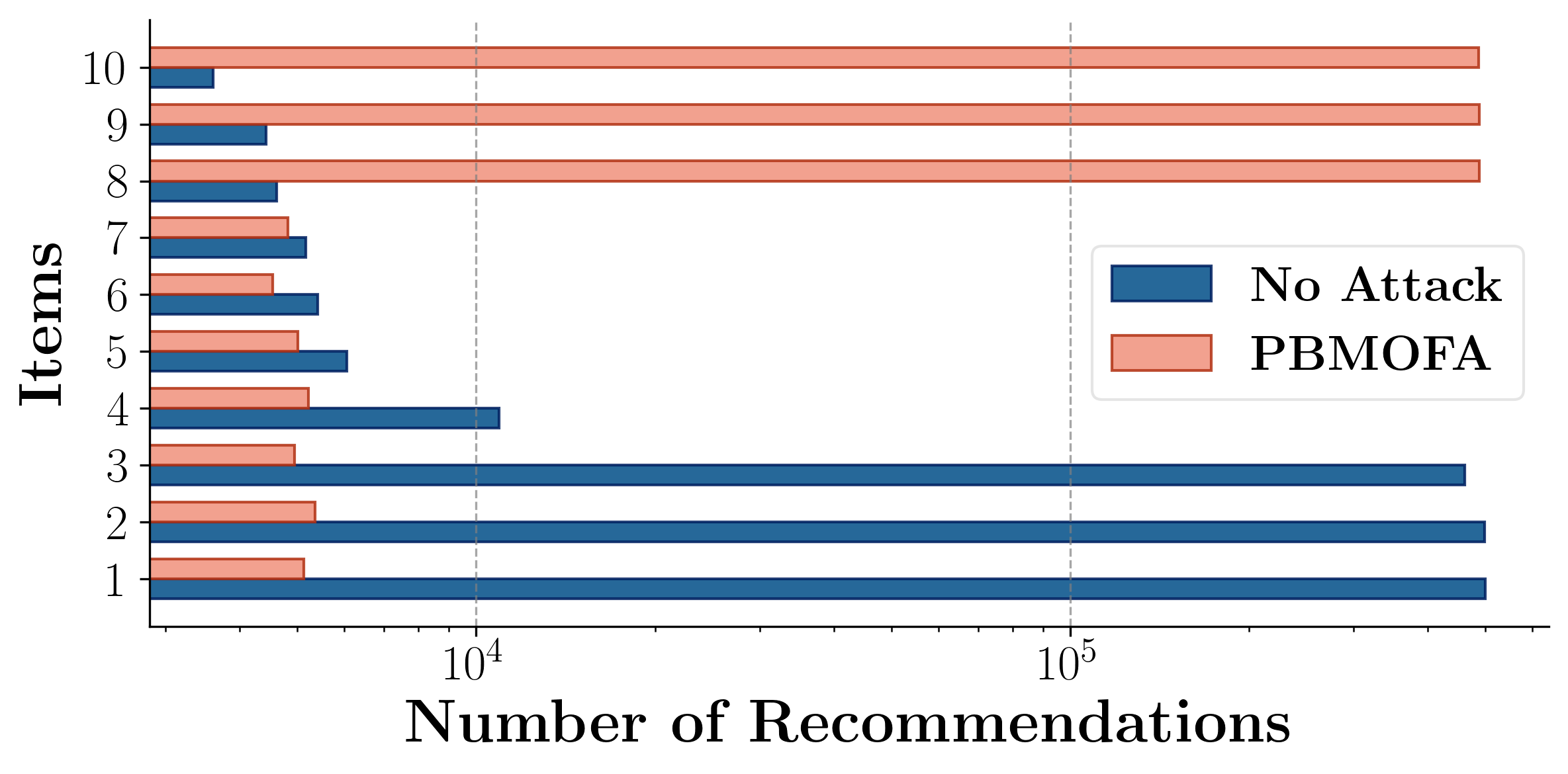}
    \subcaption{$\ofpb$ with $\tarlist=(8,9,10)$ }
    \label{fig:success_pbm}
\end{subfigure}
\caption{Comparison of the number of recommendations for target items with and without attack.}
\vspace*{-0.5em}
\label{fig:success}
\end{figure}

We similarly compare the performance of $\ofpb$ with  PBMAlphaAtk (Algorithm 2 in \cite{10.5555/3666122.3667927}) and a naive PBMATQ strategy. The naive strategy is synonymous with that of the collective adversary clicking all the examined target items while ignoring the rest, i.e.,  $\hat{\rew}_{a,t}= \sum^\nreco_{i=1} \hat{\exam}_{i,t} \cdot  \mathds{1}{\left\{a=a_{i,t},a \in \tarlist\right\}}$, in each round until $t=\ca+\cb$. Additional details on the reward manipulations required for all the attack strategies discussed above are provided in Appendix \ref{app:emp}.

Figure \ref{fig:regret} compares the regret enforced by the aforementioned attacks over $\cascadeucb$, and $\pbmucb$. Both CascadeAlphaAtk and $\ofcs$ impose linear regret on $\cascadeucb$, whereas CascadeATQ fails to do so despite having the same amount of reward manipulations as $\ofcs$. Similar trends follow for the Position-based model as well. 

It is important to note that comparing CascadeAlphaAtk and PBMAlphaAtk with other strategies is inherently unfair, as they have access to significantly more information in the form of user feedback and rewards. Both attack strategies operate online, with actions determined by user feedback.

\begin{figure}[h]
\vspace{-0.5em}
\begin{subfigure}[h]{0.49\linewidth}
    \centering
    \includegraphics[width=\linewidth]{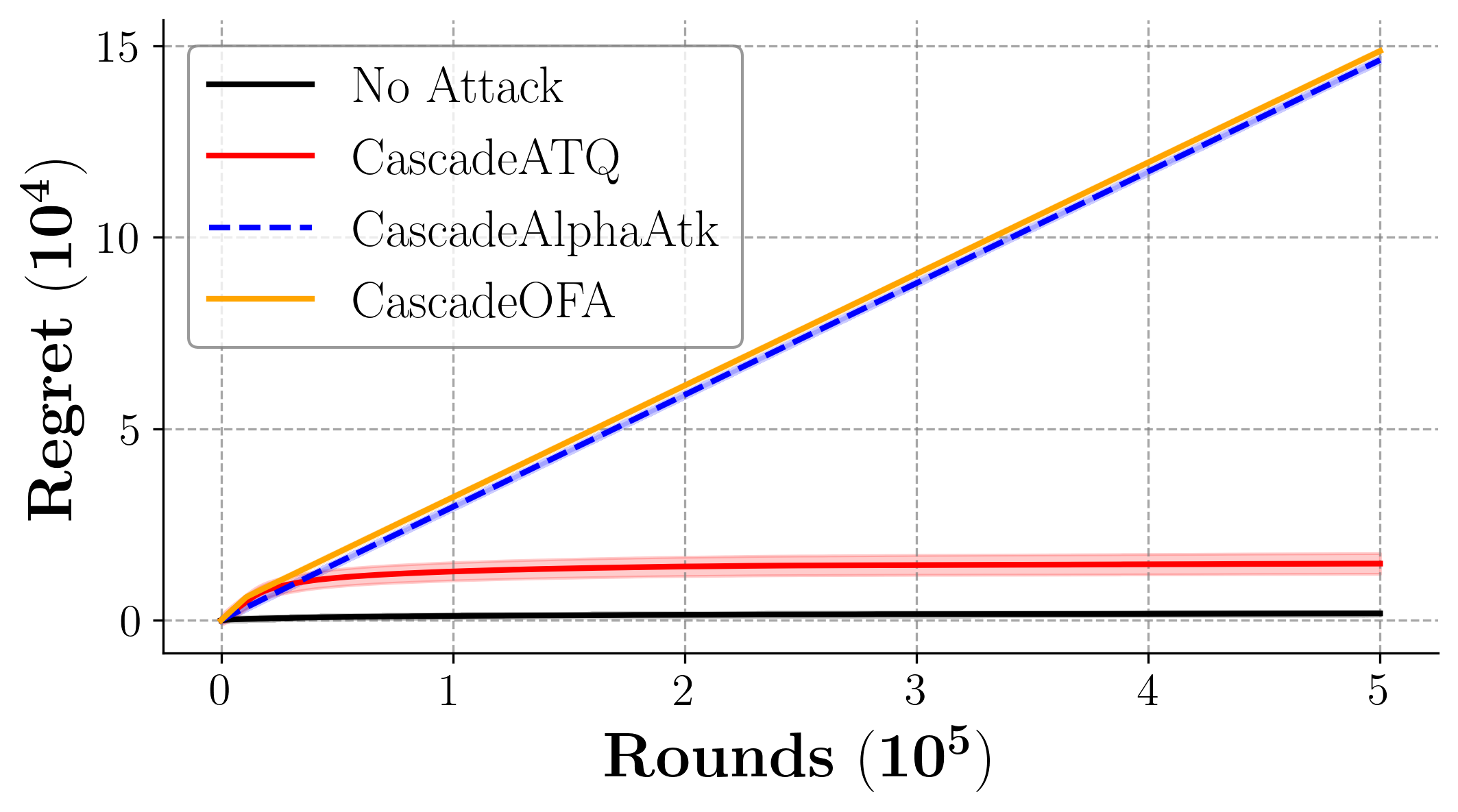}
    \subcaption{Under $\cascadeucb$}
    \label{fig:regret_cas}
\end{subfigure}
\hfill
\begin{subfigure}[h]{0.49\linewidth}
    \centering
    \includegraphics[width=\linewidth]{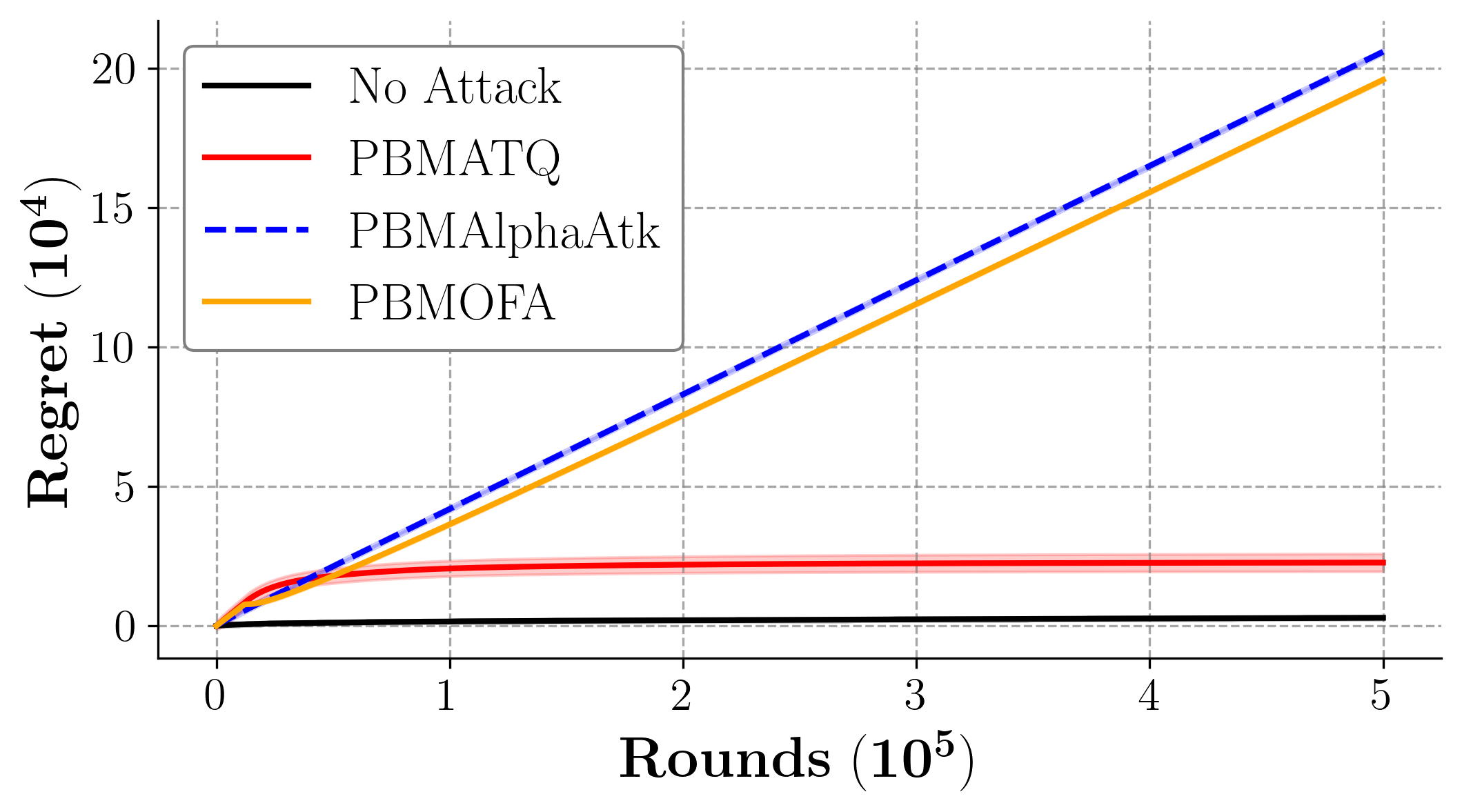}
    \subcaption{Under $\pbmucb$}
    \label{fig:regret_pbm}
\end{subfigure}
\centering
%\vspace*{-0.5em}
\caption{Comparison of regret for attack strategies under different click feedback models.}
%\vspace*{-0.5em}
\label{fig:regret} 
\end{figure}

\begin{remark}
We have conducted empirical experiments with some of the other popular OLTR algorithms as well, with results shown in Figure \ref{fig:success1} in Appendix \ref{app:emp}. We observe that $\ofcs$ successfully enforces the target choices for CascadeKL-UCB \cite{kveton15cascade} and TS-Cascade \cite{10.5555/3546258.3546476} as well. \end{remark}
\vspace*{-0.5em}
\vspace*{-0.8em}
\section{Conclusions}
\vspace*{-0.7em}
In this paper, we presented the first observation-free attack on OLTR algorithms across different click feedback models. We design two specific attacks: $\ofcs$ for $\cascadeucb$ and $\ofpb$ for $\pbmucb$. Our analysis demonstrates that both attacks, can successfully promote their target items for $ T-o(T)$ rounds with high probability, while requiring only $O(\log T)$ reward manipulations. We have supported our analysis through experiments on the real-world MovieLens dataset. In future work, we aim to develop attacks on OLTR algorithms under general stochastic click models and explore novel OLTR algorithms that would be inherently robust against such manipulations.

\bibliography{ref}

@inproceedings{10.5555/3618408.3618918,
author = {Hardt, Moritz and Mazumdar, Eric and Mendler-D\"{u}nner, Celestine and Zrnic, Tijana},
title = {Algorithmic collective action in machine learning},
year = {2023},
publisher = {JMLR.org},
booktitle = {Proceedings of the 40th International Conference on Machine Learning},
articleno = {510},
numpages = {17},
location = {Honolulu, Hawaii, USA},
series = {ICML'23}
}

@inproceedings{10.5555/3737916.3741700,
author = {Baumann, Joachim and Mendler-D\"{u}nner, Celestine},
title = {Algorithmic collective action in recommender systems: promoting songs by reordering playlists},
year = {2025},
isbn = {9798331314385},
publisher = {Curran Associates Inc.},
address = {Red Hook, NY, USA},
booktitle = {Proceedings of the 38th International Conference on Neural Information Processing Systems},
articleno = {3784},
numpages = {27},
location = {Vancouver, BC, Canada},
series = {NIPS '24}
}

@inproceedings{kveton15cascade,
	Author = {Kveton, Branislav and Wen, Zheng and Ashkan, Azin and Szepesvari, Csaba},
	Booktitle = {ICML},
	Date-Added = {2015-10-11 22:59:21 +0000},
	Date-Modified = {2015-10-11 23:01:04 +0000},
	Pages = {767--776},
	Title = {Cascading Bandits: Learning to Rank in the Cascade Model},
	Year = {2015}}

@inproceedings{NEURIPS2021_be315e7f,
 author = {Xu, Yinglun and Kumar, Bhuvesh and Abernethy, Jacob D},
 booktitle = {Advances in Neural Information Processing Systems},
 editor = {M. Ranzato and A. Beygelzimer and Y. Dauphin and P.S. Liang and J. Wortman Vaughan},
 pages = {22550--22561},
 publisher = {Curran Associates, Inc.},
 title = {Observation-Free Attacks on Stochastic Bandits},
 url = {https://proceedings.neurips.cc/paper_files/paper/2021/file/be315e7f05e9f13629031915fe87ad44-Paper.pdf},
 volume = {34},
 year = {2021}
}

@inproceedings{cascademod,
author = {Craswell, Nick and Zoeter, Onno and Taylor, Michael and Ramsey, Bill},
title = {An experimental comparison of click position-bias models},
year = {2008},
isbn = {9781595939272},
publisher = {Association for Computing Machinery},
address = {New York, NY, USA},
url = {https://doi.org/10.1145/1341531.1341545},
doi = {10.1145/1341531.1341545},
booktitle = {Proceedings of the 2008 International Conference on Web Search and Data Mining},
pages = {87–94},
numpages = {8},
series = {WSDM '08}
}

@article{10.1023/A:1013689704352,
author = {Auer, Peter and Cesa-Bianchi, Nicol\`{o} and Fischer, Paul},
title = {Finite-time Analysis of the Multiarmed Bandit Problem},
year = {2002},
issue_date = {May-June 2002},
publisher = {Kluwer Academic Publishers},
address = {USA},
volume = {47},
number = {2–3},
issn = {0885-6125},
url = {https://doi.org/10.1023/A:1013689704352},
doi = {10.1023/A:1013689704352},
abstract = {Reinforcement learning policies face the exploration versus exploitation dilemma, i.e. the search for a balance between exploring the environment to find profitable actions while taking the empirically best action as often as possible. A popular measure of a policy's success in addressing this dilemma is the regret, that is the loss due to the fact that the globally optimal policy is not followed all the times. One of the simplest examples of the exploration/exploitation dilemma is the multi-armed bandit problem. Lai and Robbins were the first ones to show that the regret for this problem has to grow at least logarithmically in the number of plays. Since then, policies which asymptotically achieve this regret have been devised by Lai and Robbins and many others. In this work we show that the optimal logarithmic regret is also achievable uniformly over time, with simple and efficient policies, and for all reward distributions with bounded support.},
journal = {Mach. Learn.},
month = may,
pages = {235–256},
numpages = {22},
keywords = {finite horizon regret, bandit problems, adaptive allocation rules}
}

@inproceedings{PBM-UCB,
author = {Lagr\'{e}e, Paul and Vernade, Claire and Capp\'{e}, Olivier},
title = {Multiple-play bandits in the position-based model},
year = {2016},
isbn = {9781510838819},
publisher = {Curran Associates Inc.},
address = {Red Hook, NY, USA},
abstract = {Sequentially learning to place items in multi-position displays or lists is a task that can be cast into the multiple-play semi-bandit setting. However, a major concern in this context is when the system cannot decide whether the user feedback for each item is actually exploitable. Indeed, much of the content may have been simply ignored by the user. The present work proposes to exploit available information regarding the display position bias under the so-called Position-based click model (PBM). We first discuss how this model differs from the Cascade model and its variants considered in several recent works on multiple-play bandits. We then provide a novel regret lower bound for this model as well as computationally efficient algorithms that display good empirical and theoretical performance.},
booktitle = {Proceedings of the 30th International Conference on Neural Information Processing Systems},
pages = {1605–1613},
numpages = {9},
location = {Barcelona, Spain},
series = {NIPS'16}
}

@article{Harper2016TheMD,
  title={The MovieLens Datasets: History and Context},
  author={F. Maxwell Harper and Joseph A. Konstan and Joseph A.},
  journal={ACM Trans. Interact. Intell. Syst.},
  year={2016},
  volume={5},
  pages={19:1-19:19},
  url={https://api.semanticscholar.org/CorpusID:16619709}
}

@article{Zong2016CascadingBF,
  title={Cascading Bandits for Large-Scale Recommendation Problems},
  author={Shi Zong and Hao Ni and Kenny Sung and Nan Rosemary Ke and Zheng Wen and Branislav Kveton},
  journal={ArXiv},
  year={2016},
  volume={abs/1603.05359},
  url={https://api.semanticscholar.org/CorpusID:2545548}
}

@book{chuklin2015click,
  author    = {Aleksandr Chuklin and Ilya Markov and Maarten de Rijke},
  title     = {Click Models for Web Search},
  year      = {2015},
  publisher = {Springer},
  address   = {Cham, Switzerland},
  series    = {Synthesis Lectures on Information Concepts, Retrieval, and Services},
  volume    = {43},
  isbn      = {978-3-031-01166-5},
  doi       = {10.1007/978-3-031-02294-4},
  url       = {https://link.springer.com/book/10.1007/978-3-031-02294-4}
}

@inproceedings{10.5555/3666122.3667927,
author = {Zuo, Jinhang and Zhang, Zhiyao and Wang, Zhiyong and Li, Shuai and Hajiesmaili, Mohammad and Wierman, Adam},
title = {Adversarial attacks on online learning to rank with click feedback},
year = {2023},
publisher = {Curran Associates Inc.},
address = {Red Hook, NY, USA},
abstract = {Online learning to rank (OLTR) is a sequential decision-making problem where a learning agent selects an ordered list of items and receives feedback through user clicks. Although potential attacks against OLTR algorithms may cause serious losses in real-world applications, there is limited knowledge about adversarial attacks on OLTR. This paper studies attack strategies against multiple variants of OLTR. Our first result provides an attack strategy against the UCB algorithm on classical stochastic bandits with binary feedback, which solves the key issues caused by bounded and discrete feedback that previous works cannot handle. Building on this result, we design attack algorithms against UCB-based OLTR algorithms in position-based and cascade models. Finally, we propose a general attack strategy against any algorithm under the general click model. Each attack algorithm manipulates the learning agent into choosing the target attack item T - o(T) times, incurring a cumulative cost of o(T). Experiments on synthetic and real data further validate the effectiveness of our proposed attack algorithms.},
booktitle = {Proceedings of the 37th International Conference on Neural Information Processing Systems},
articleno = {1805},
numpages = {18},
location = {New Orleans, LA, USA},
series = {NIPS '23}
}

@article{
wang2024adversarial,
title={Adversarial Attacks on Online Learning to Rank with Stochastic Click Models},
author={Zichen Wang and Rishab Balasubramanian and Hui Yuan and chenyu song and Mengdi Wang and Huazheng Wang},
journal={Transactions on Machine Learning Research},
issn={2835-8856},
year={2024},
url={https://openreview.net/forum?id=BKwGowR0Bt},
note={}
}

@inproceedings{10.5555/3305890.3306115,
author = {Zoghi, Masrour and Tunys, Tomas and Ghavamzadeh, Mohammad and Kveton, Branislav and Szepesvari, Csaba and Wen, Zheng},
title = {Online learning to rank in stochastic click models},
year = {2017},
publisher = {JMLR.org},
abstract = {Online learning to rank is a core problem in information retrieval and machine learning. Many provably efficient algorithms have been recently proposed for this problem in specific click models. The click model is a model of how the user interacts with a list of documents. Though these results are significant, their impact on practice is limited, because all proposed algorithms are designed for specific click models and lack convergence guarantees in other models. In this work, we propose BatchRank, the first online learning to rank algorithm for a broad class of click models. The class encompasses two most fundamental click models, the cascade and position-based models. We derive a gap-dependent upper bound on the T-step regret of BatchRank and evaluate it on a range of web search queries. We observe that BatchRank outperforms ranked bandits and is more robust than CascadeKL-UCB, an existing algorithm for the cascade model.},
booktitle = {Proceedings of the 34th International Conference on Machine Learning - Volume 70},
pages = {4199–4208},
numpages = {10},
location = {Sydney, NSW, Australia},
series = {ICML'17}
}

@article{Lattimore2018TopRankAP,
  title={TopRank: A practical algorithm for online stochastic ranking},
  author={Tor Lattimore and Branislav Kveton and Shuai Li and Csaba Szepesvari},
  journal={ArXiv},
  year={2018},
  volume={abs/1806.02248},
  url={https://api.semanticscholar.org/CorpusID:46946360}
}

@inproceedings{10.5555/2969239.2969401,
author = {Kveton, Branislav and Wen, Zheng and Ashkan, Azin and Szepesv\'{a}ri, Csaba},
title = {Combinatorial cascading bandits},
year = {2015},
publisher = {MIT Press},
address = {Cambridge, MA, USA},
abstract = {We propose combinatorial cascading bandits, a class of partial monitoring problems where at each step a learning agent chooses a tuple of ground items subject to constraints and receives a reward if and only if the weights of all chosen items are one. The weights of the items are binary, stochastic, and drawn independently of each other. The agent observes the index of the first chosen item whose weight is zero. This observation model arises in network routing, for instance, where the learning agent may only observe the first link in the routing path which is down, and blocks the path. We propose a UCB-like algorithm for solving our problems, CombCascade; and prove gap-dependent and gap-free upper bounds on its n-step regret. Our proofs build on recent work in stochastic combinatorial semi-bandits but also address two novel challenges of our setting, a non-linear reward function and partial observability. We evaluate CombCascade on two real-world problems and show that it performs well even when our modeling assumptions are violated. We also demonstrate that our setting requires a new learning algorithm.},
booktitle = {Proceedings of the 29th International Conference on Neural Information Processing Systems - Volume 1},
pages = {1450–1458},
numpages = {9},
location = {Montreal, Canada},
series = {NIPS'15}
}

@inproceedings{10.1145/2911451.2914798,
author = {Grotov, Artem and de Rijke, Maarten},
title = {Online Learning to Rank for Information Retrieval: SIGIR 2016 Tutorial},
year = {2016},
isbn = {9781450340694},
publisher = {Association for Computing Machinery},
address = {New York, NY, USA},
url = {https://doi.org/10.1145/2911451.2914798},
doi = {10.1145/2911451.2914798},
abstract = {During the past 10--15 years offline learning to rank has had a tremendous influence on information retrieval, both scientifically and in practice. Recently, as the limitations of offline learning to rank for information retrieval have become apparent, there is increased attention for online learning to rank methods for information retrieval in the community. Such methods learn from user interactions rather than from a set of labeled data that is fully available for training up front.Below we describe why we believe that the time is right for an intermediate-level tutorial on online learning to rank, the objectives of the proposed tutorial, its relevance, as well as more practical details, such as format, schedule and support materials.},
booktitle = {Proceedings of the 39th International ACM SIGIR Conference on Research and Development in Information Retrieval},
pages = {1215–1218},
numpages = {4},
keywords = {bandit algorithms, exploration vs. exploitation, online learning to rank},
location = {Pisa, Italy},
series = {SIGIR '16}
}

@inproceedings{Jun2018AdversarialAO,
  title={Adversarial Attacks on Stochastic Bandits},
  author={Kwang-Sung Jun and Lihong Li and Yuzhe Ma and Xiaojin Zhu},
  booktitle={Neural Information Processing Systems},
  year={2018},
  url={https://api.semanticscholar.org/CorpusID:53104043}
}

@article{Liu2019DataPA,
  title={Data Poisoning Attacks on Stochastic Bandits},
  author={Fang Liu and Ness B. Shroff},
  journal={ArXiv},
  year={2019},
  volume={abs/1905.06494},
  url={https://api.semanticscholar.org/CorpusID:155100228}
}

@inproceedings{Wang2024StealthyAA,
  title={Stealthy Adversarial Attacks on Stochastic Multi-Armed Bandits},
  author={Zhiwei Wang and Huazheng Wang and Hongning Wang},
  booktitle={AAAI Conference on Artificial Intelligence},
  year={2024},
  url={https://api.semanticscholar.org/CorpusID:267770319}
}

@inproceedings{Rangi2021SavingSB,
  title={Saving Stochastic Bandits from Poisoning Attacks via Limited Data Verification},
  author={Anshuka Rangi and Long Tran-Thanh and Haifeng Xu and Massimo Franceschetti},
  booktitle={AAAI Conference on Artificial Intelligence},
  year={2021},
  url={https://api.semanticscholar.org/CorpusID:231925173}
}

@article{liu2009learning,
  title={Learning to rank for information retrieval},
  author={Liu, Tie-Yan and others},
  journal={Foundations and Trends{\textregistered} in Information Retrieval},
  volume={3},
  number={3},
  pages={225--331},
  year={2009},
  publisher={Now Publishers, Inc.}
}

@article{10.5555/3546258.3546476,
author = {Zhong, Zixin and Chueng, Wang Chi and Tan, Vincent Y. F.},
title = {Thompson sampling algorithms for cascading bandits},
year = {2021},
issue_date = {January 2021},
publisher = {JMLR.org},
volume = {22},
number = {1},
issn = {1532-4435},
abstract = {Motivated by the important and urgent need for efficient optimization in online recommender systems, we revisit the cascading bandit model proposed by Kveton et al. (2015a). While Thompson sampling (TS) algorithms have been shown to be empirically superior to Upper Confidence Bound (UCB) algorithms for cascading bandits, theoretical guarantees are only known for the latter. In this paper, we first provide a problem-dependent upper bound on the regret of a TS algorithm with Beta-Bernoulli updates; this upper bound is tighter than a recent derivation under a more general setting by Huyuk and Tekin (2019). Next, we design and analyze another TS algorithm with Gaussian updates, TS-Cascade. TS-Cascade achieves the state-of-the-art problem-independent regret bound for cascading bandits. Complementarily, we consider a linear generalization of the cascading bandit model, which allows efficient learning in large-scale cascading bandit problem instances. We introduce and analyze a TS algorithm, which enjoys a regret bound that depends on the dimension of the linear model but not the number of items. Finally, by using information-theoretic techniques and a judicious construction of cascading bandit instances, we derive a nearly-matching lower bound on the expected regret for the standard model. Our paper establishes the first theoretical guarantees on TS algorithms for a stochastic combinatorial bandit problem model with partial feedback. Numerical experiments demonstrate the superiority of the proposed TS algorithms compared to existing UCB-based ones.},
journal = {J. Mach. Learn. Res.},
month = jan,
articleno = {218},
numpages = {66},
keywords = {regret minimization, linear bandits, cascading bandits, Thompson sampling, multi-armed bandits}
}

@book{lattimore2020bandit,
  title={Bandit algorithms},
  author={Lattimore, Tor and Szepesv{\'a}ri, Csaba},
  year={2020},
  publisher={Cambridge University Press}
}
\bibliographystyle{IEEEtran}
%%%%%%%%%%%%%%%%%%%%%%%%%%%%%%%%%%%%%%%%%%%%%%%%%%%%%%%%%%%%

\appendix

% \section{Technical Appendices and Supplementary Material}
% \newpage
\vspace*{-0.5 em}
\section{Algorithms for OLTR}
\vspace*{-0.5 em}
\label{App:Algos}
In this section, we discuss two prominent OLTR algorithms: $\cascadeucb$ (Algorithm \ref{alg:cascadeucb1}) for the Cascade model and  $\pbmucb$ (Algorithm \ref{alg:pbmucb}) for PBM.
\vspace*{-0.5 em}
\subsection{Definitions}
\vspace*{-0.5 em}
 Before presenting the algorithms in detail, we define the following terms:  
\begin{enumerate}  
\item[--] \textit{Number of Recommendations}: \(\reco_a\left(t\right) =\sum_{\tau=1}^{t} \mathds{1}{\{a \in \reclist_\tau\}}\) represents the number of times an OLTR algorithm has recommended an item \( a \) up to round \( t \).  

\item[--] \textit{Number of Examinations}: \(\pulls_a\left(t\right) =\sum_{\tau=1}^{t} \sum_{i=1}^{\nreco} \mathds{1}{\{a_{i,\tau} = a\}} \cdot \exam_{i,\tau}\) denotes the number of times item \( a \) has been examined by users up to time \( t \). 

\item[--] \textit{Number of Clicks}: \(\nclk_a\left(t\right) =\sum_{\tau=1}^{t}  \rew_{a,\tau}\) denotes the number of times item \( a \) has been clicked by users up to time \( t \).

\item[--] \textit{Empirical Mean}: The empirical mean reward for an item is given by \(\hat{\wts}_{a,t}= \sum_{\tau=1}^{t} \)\( \rew_{a,\tau}/{\pulls_a\left(t\right)}\),and is defined as the average number of clicks per examination.  

\item[--] \textit{UCB Index}: The term \( U_a(t) \) represents the upper confidence bound of the attractiveness of item \( a \), as maintained by the OLTR algorithm.  
\end{enumerate}  
\vspace*{-0.5 em}
\begin{algorithm}[H]
\caption{$\cascadeucb$}
\label{alg:cascadeucb1}
\begin{algorithmic}
\State {\bfseries Input:} item set $\totset$, number of recommended items $\nreco$, horizon $T$, exploration parameter $\alpha$
\State {\bfseries Initialize:} $t \leftarrow 1$, $\hat{\wts}_{a,t} \leftarrow 0$, $\pulls_a\left(t\right) \leftarrow 1$, for each $a \in \totset$ 
\While {$t \leq T$}
\State Compute $U_a\left(t\right)$ according to \eqref{eq:csub}
\State Recommend $\reclist_t  =  (a_{1,t}, \dots, a_{\nreco,t})$ to the user, where $a_{1,t} \dots,a_{\nreco,t}$ $\in\totset$ are the $\nreco$ items with largest $U_a\left(t-1\right)$
\State  Observe the user feedback - $\exam_t$ and $\click_t$ (equivalently $\rew_t$)
\For {$a \in \reclist_t$}
\If {$a$ is examined}
\State $\pulls_a\left(t\right) \leftarrow \pulls_{a}\left(t-1\right) +1$
\State$\hat{\wts}_{a,t} \leftarrow \{\hat{\wts}_{a,t-1}\pulls_{a}\left(t-1\right) + \rew_{a,t}\}/{\pulls_a\left(t\right)}$
\Else
\State $\pulls_a\left(t\right) \leftarrow \pulls_{a}\left(t-1\right)$ ; $\hat{\wts}_{a,t} \leftarrow \hat{\wts}_{a,t-1} $
\EndIf
\EndFor
%\State Compute $U_a\left(t\right) \leftarrow \hat{\wts}_{a,t} + \sqrt{\frac{\alpha \log T}{\pulls_a\left(t\right)}}$
\State $t \leftarrow t + 1$
\EndWhile
\end{algorithmic}
\end{algorithm}
\vspace*{-0.5 em}
\subsection{UCB-based OLTR Algorithms}
\vspace*{-0.5 em}
In each round, both CascadeUCB1 and PBM-UCB compute the UCB index for each item based on the corresponding number of clicks, recommendations, and examinations, then select the \( K \) items with the highest UCB indices and recommend them as \(\reclist_t\), ranking them in descending order of their UCB values. The computation of UCB differs significantly between the two algorithms; the computation in $\cascadeucb$ is similar to that of the classical UCB1 algorithm \cite{10.1023/A:1013689704352}, with a parameter $\alpha>1$,
\begin{equation}
    U_a\left(t\right) = \hat{\wts}_{a,t} + \sqrt{\frac{\alpha\log t}{\pulls_a\left(t\right)}},
    \label{eq:csub}
\end{equation}
whereas $\pbmucb$, where the learning agent cannot observe the examination feedback $\exam_t$, uses
\begin{equation}
    U_a\left(t\right) = \frac{\nclk_a(t)}{\Tilde{\pulls}_a\left(t\right)} + \sqrt{\frac{\alpha \reco_a\left(t\right) \log t}{{\Tilde{\pulls}}_a^2\left(t\right)}},
    \label{eq:pbub}
\end{equation}
where $\Tilde{\pulls}_a\left(t\right)$ is an unbiased estimator for the number of examinations, given by $\Tilde{\pulls}_a\left(t\right) = \sum^{T}_{\tau =1}\sum^{\nreco}_{i =1} p_i \mathds{1}{\left\{a=a_{i,\tau}\right\}}  $.

 The $\cascadeucb$ algorithm was originally introduced in \cite{kveton15cascade},
which provided an $O(\log T)$ upper bound on its expected cumulative regret for a given horizon $T$. The corresponding $\pbmucb$ algorithm for position-based feedback was proposed by \cite{PBM-UCB} with a similar $O(\log T)$ regret bound. 
\vspace*{-0.5 em}
\begin{algorithm}[H]
\caption{$\pbmucb$}
\label{alg:pbmucb}
\begin{algorithmic}
\State {\bfseries Input:} item set $\totset$, number of recommended items $\nreco$, position bias $\poslist$, horizon $T$, exploration parameter $\alpha$
\State {\bfseries Initialize:} $t \leftarrow 1$, $\nclk_a\left(t\right) \leftarrow 0$, $\reco_a\left(t\right) \leftarrow 1$, for each $a \in \totset$ 
\While {$t \leq T$}
\State Compute $U_a\left(t\right)$ according to  \eqref{eq:pbub}
\State Recommend $\reclist_t  =  (a_{1,t}, \dots, a_{\nreco,t})$ to the user, where $a_{1,t} \dots,a_{\nreco,t}$ $\in\totset$ are the $\nreco$ items with largest $U_a\left(t-1\right)$

\State  Observe the user feedback  $\click_t$ and $\rew_t$
\For {$a \in \reclist_t$}
\State $\reco_a\left(t\right) \leftarrow \reco_{a}\left(t-1\right) +1$
\State $\nclk_a\left(t\right) \leftarrow \nclk_{a}\left(t-1\right) +\rew_{a,t}$
\EndFor
%\State Compute $U_a\left(t\right) \leftarrow \hat{\wts}_{a,t} + \sqrt{\frac{\alpha \log T}{\pulls_a\left(t\right)}}$
\State $t \leftarrow t + 1$
\EndWhile
\end{algorithmic}
\end{algorithm}
\vspace*{-0.5 em}
\begin{remark} 
\label{rem:alpha}
The parameter $\alpha$, present in both the OLTR algorithms (check \eqref{eq:csub} and \eqref{eq:pbub}), is often referred to as the exploration parameter or the confidence parameter. The parameter is commonly used for the UCB-based online learning algorithms to define the confidence radii. Intuitively, a higher $\alpha$ ensures that the OLTR is more confident of its learned list being optimal, but it comes with a higher regret bound for the algorithms due to increased exploration of sub-optimal algorithms. Both the OLTR algorithms and their corresponding attack strategies are valid for any $\alpha>1$.
\end{remark}
\vspace*{-0.5 em}
\section{Key Proofs}
\label{App:Key Proofs}
\vspace*{-0.5 em}
This section provides the proofs for the theorems given in Section \ref{sec:genattack}.
\subsection{Proof of Theorem \ref{thm:cascadeatk}}
\label{app:proof-cas}
Some of the consequences of the $\ofcs$ attack on $\cascadeucb$ are stated below in the following lemmas, which are used to prove Theorem \ref{thm:cascadeatk}. The proofs of these lemmas are discussed in the Appendix \ref{App: Add Proofs}. 
\begin{lemma} 
\label{lemma:c1_cas}
After the first phase of $\ofcs$ with the value of $\ca$ given in Section \ref{subsec:cas-skel},
 $U_{a}(\ca+1) \leq \pmy \ \;\forall\; a \in \totset$. 
 \end{lemma}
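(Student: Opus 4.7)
The plan is to exploit the fact that Phase 1 of $\ofcs$ feeds $\cascadeucb$ only zero rewards, which simultaneously fixes the empirical means at zero and suppresses all clicks, thereby turning the UCB-driven selection into a deterministic balanced round-robin over items.

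\textbf{Step 1 (the UCB collapses to $1/\sqrt{N_a}$).} Since $\hat{\rew}_{a,t}=0$ for every $a$ and $t\leq\ca$, the running average $\hat{\wts}_{a,t}$ stays at zero throughout Phase 1, so the formula in \eqref{eq:csub} reduces to $U_a(t)=\sqrt{\alpha\log t / \pulls_a(t)}$. Moreover, with all injected rewards being zero, the simulated Cascade user never clicks, so every item in the recommended list $\reclist_t$ is examined and each of the $\nreco$ slots contributes exactly one increment to the corresponding $\pulls_a$ per round.

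\textbf{Step 2 (balanced round-robin bound on $\pulls_a$).} Because $U_a(t)$ is strictly decreasing in $\pulls_a(t)$, the top-$\nreco$ selection rule is equivalent to picking the $\nreco$ items with the smallest examination counts (ties broken by a fixed rule). A short induction on $t$ then shows that the $\pulls_a$ values occupy at most two consecutive integers throughout Phase 1, i.e.\ $\max_a \pulls_a(t) - \min_a \pulls_a(t) \leq 1$. Combined with the conservation identity $\sum_a \pulls_a(\ca+1) = \ntot + \nreco\,\ca$, this spread-one invariant yields
\[
\min_a \pulls_a(\ca+1) \;\geq\; \frac{\nreco\,\ca}{\ntot} \;=\; \nreco \left\lceil \frac{\alpha \log T}{\nreco\,\psq} \right\rceil \;\geq\; \frac{\alpha \log T}{\psq}.
\]

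\textbf{Step 3 (finishing).} Substituting this lower bound into the simplified UCB and using the crude inequality $\log(\ca+1)\leq\log T$ (valid since $\ca$ is $O(\log T)$) gives
\[
U_a(\ca+1) \;=\; \sqrt{\frac{\alpha\log(\ca+1)}{\pulls_a(\ca+1)}} \;\leq\; \sqrt{\frac{\alpha\log T}{\alpha\log T/\psq}} \;=\; \pmy,
\]
for every $a\in\totset$, which is the statement.

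The main obstacle is the balance invariant in Step~2. While intuitive, it has to be justified with a careful case analysis based on the number of items tied at the current minimum $m$: if this number is at most $\nreco$, all of them are promoted to $m+1$ together with some items from the next level; if it exceeds $\nreco$, only $\nreco$ of them are promoted and the rest remain, both scenarios preserving the spread-one property. Once this invariant is in place, the rest is an elementary substitution using the prescribed value of $\ca$.
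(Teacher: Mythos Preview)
Your proof is correct and follows essentially the same route as the paper: both exploit that zero rewards collapse the UCB to a function of $\pulls_a$ alone, forcing balanced examination counts, and then substitute the resulting lower bound $\pulls_a(\ca+1)\geq \nreco\ca/\ntot$ into the UCB formula. The only cosmetic difference is that the paper abstracts the balance property into a separate round-robin lemma (an examined-but-not-clicked item cannot be re-examined before all others are), whereas you prove the equivalent spread-one invariant $\max_a \pulls_a - \min_a \pulls_a \leq 1$ directly via the case analysis you outline.
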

From Lemma \ref{lemma:c1_cas}, we conclude that, at the end of phase 1 of $\ofcs$, UCBs for all the items in $\totset$ fall below $\pmy$.
\begin{lemma} 
\label{lemma:c2_cas}
After the second phase of $\ofcs$ with $\ca$ and $\cb$  given in Section \ref{subsec:cas-skel}, $U_{a}(\ca+\cb+1)  > \pmy$ for all $ a \in \tarlist$ and  $U_{a'}(\ca+\cb+1)  \leq \pmy$ for all $ a' \notin \tarlist$.
\end{lemma}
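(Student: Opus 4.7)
The plan is to prove the lemma by handling the two conclusions separately, with the claim on target items requiring an induction over the $\nreco$ sub-phases of phase 2. For non-target items the conclusion is essentially a direct corollary of Lemma \ref{lemma:c1_cas}: by construction of $\ofcs$ the adversary only injects a unit reward for the current sub-phase's target, so every $a' \notin \tarlist$ receives zero reward throughout phases 1 and 2, forcing $\hat{\wts}_{a',t}=0$ at every such $t$. Combined with the pull-count lower bound $\pulls_{a'}(\ca) \geq \alpha\log T/\psq$ from Lemma \ref{lemma:c1_cas} (and the fact that pulls are monotone non-decreasing through phase 2), together with $\log(\ca+\cb) \leq \log T$, the UCB formula \eqref{eq:csub} yields $U_{a'}(\ca+\cb+1) \leq \sqrt{\psq} = \pmy$.

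For the target items I would induct on the sub-phase index $i = 1,\ldots,\nreco$: the inductive hypothesis is that after sub-phase $i-1$, $U_{\tarlist[j]} > \pmy$ for all $j<i$ and $U_a \leq \pmy$ for every other item $a$. The crux of the step is the following structural claim about sub-phase $i$: within its $m = \cb/\nreco$ rounds, $\tarlist[i]$ fails to appear in $\reclist_t$ for at most $\ntot-\nreco+1$ rounds. On such an ``unproductive'' round no click is injected (since $\tarlist[i]$ is absent from $\reclist_t$), so by the cascade model all $\nreco$ listed items are examined with zero reward and their UCBs strictly decrease; meanwhile $\tarlist[i]$ is not examined and its UCB does not decrease. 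A pigeonhole argument over the bounded collection of non-boosted competitors that can crowd $\tarlist[i]$ out of the top-$\nreco$ then furnishes the bound.

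Once at least $m-(\ntot-\nreco+1)$ productive rounds are secured in sub-phase $i$, I would track $\tarlist[i]$'s pull and click counts through the remaining sub-phases, accounting for the dilution it suffers in sub-phases $i+1,\ldots,\nreco$ (where it typically stays in $\reclist_t$ above the then-current target and is examined with zero reward up to $(\nreco-i)m$ additional times). The worst case is $i=1$: upper-bounding $\pulls_{\tarlist[1]}(\ca+\cb) \leq \nreco\ca/\ntot + \nreco m$ and lower-bounding its cumulative clicks by $m-(\ntot-\nreco+1)$ reduces the condition $\hat{\wts}_{\tarlist[1]} > \pmy$ (which already implies $U_{\tarlist[1]} > \pmy$) to $m(1-\nreco\pmy) > \pmy\nreco\ca/\ntot + \ntot-\nreco+1$, which is precisely the defining inequality of $\cb$ in Section \ref{subsec:cas-skel}. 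Since any $\tarlist[i]$ with $i>1$ suffers strictly less dilution, the inductive conclusion follows for all target items.

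The main obstacle is the structural claim bounding unproductive rounds per sub-phase. Because phase 1 treats all items symmetrically, many begin sub-phase $i$ with essentially equal UCBs, so under adversarial tie-breaking $\tarlist[i]$ could in principle be excluded from $\reclist_t$ for several consecutive rounds. The argument has to simultaneously track the UCB decay of every non-boosted competitor against the non-decreasing UCB of $\tarlist[i]$, and a careful case analysis is needed to show that the exclusion cannot persist for more than $\ntot-\nreco+1$ rounds. Once that combinatorial lemma is in hand, plugging the explicit values of $\ca$, $\cb$, and $\pmy$ into the UCB formula is routine arithmetic.
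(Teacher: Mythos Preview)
Your plan matches the paper's proof almost exactly: the same sub-phase induction, the same $L-K+1$ bound on the number of rounds before $\tarlist[i]$ enters $\reclist_t$, the same click lower bound $m-(L-K+1)$ and pull upper bound $K\ca/L+\cb$, and the same reduction of $\hat{\wts}_{\tarlist[i]}\ge\pmy$ to the defining inequality for $\cb$. The only substantive difference is your ``main obstacle'': the paper dispatches it in one line via the round-robin Lemma~\ref{lemma:round-robin}, observing that at the start of sub-phase $i$ the $L-i+1$ zero-mean items cycle through the bottom $K-i+1$ slots, so $\tarlist[i]$ appears within $\lceil (L-i+1)/(K-i+1)\rceil\le L-K+1$ rounds---no tie-breaking case analysis is needed.
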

 Thus, at the end of Phase 2 of $\ofcs$, the UCBs of all items in the target set $\tarset$ exceed $\pmy$, while those for the remaining items in $\totset$ stay below $\pmy$.
\begin{lemma}
\label{lemma:eff_cas} Without any further manipulation in the third phase of $\ofcs$, an item $a\in \tarlist$ maintains $U_a\left(t\right)>\pmy$, with probability
\begin{equation}
\label{cascade_success}
\Pr\{U_a\left(t\right)>\pmy\; \forall \; \ca+\cb<t \leq T\}  \geq 1-1/T,
\end{equation}
\end{lemma}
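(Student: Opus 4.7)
The plan is to establish the lemma by combining a conditional/inductive reduction for the non-target items with a Hoeffding-type concentration bound on the true Phase~3 reward process for the target ones. By Lemma~\ref{lemma:c2_cas}, at $t=\ca+\cb+1$ every $a\in\tarlist$ satisfies $U_a>\pmy$ while every $a'\notin\tarlist$ satisfies $U_{a'}\leq\pmy$. Working under the hypothesis that the target UCBs remain above $\pmy$ throughout Phase~3, the top-$\nreco$ positions of $\reclist_t$ are occupied exactly by $\tarlist$, so non-target items $a'$ are never examined, $\pulls_{a'}(t)$ stays frozen at its end-of-Phase~2 value, and $\hat{\wts}_{a',t}=0$. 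Since the Phase~1 design (cf.\ proof of Lemma~\ref{lemma:c1_cas}) already forces $\pulls_{a'}\geq \alpha\log T/\pmy^{2}$, the bound $U_{a'}(t)=\sqrt{\alpha\log t/\pulls_{a'}}\leq\pmy$ persists through $t\leq T$. Hence the lemma reduces to controlling $U_a(t)$ for a single $a\in\tarlist$.

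For such an $a$, I would decompose $\pulls_a(t)=n_{12}+n^{(3)}(t)$ and $\nclk_a(t)=c_{12}+c^{(3)}(t)$, where the $12$-terms are fixed at the end of Phase~2 and the $(3)$-terms accrue only in Phase~3. Since Phase~3 uses unmanipulated feedback, $c^{(3)}(t)$ is a sum of $n^{(3)}(t)$ i.i.d.\ $\mathrm{Bernoulli}(\wts_a)$ draws, one per Phase~3 examination of $a$. The event $U_a(t)\leq\pmy$ is then equivalent to the lower-tail event
\[
c^{(3)}(t)\leq (\pmy n_{12}-c_{12}) + \pmy\,n^{(3)}(t) - \sqrt{\alpha\bigl(n_{12}+n^{(3)}(t)\bigr)\log t}.
\]
Unpacking $U_a(\ca+\cb+1)>\pmy$ from Lemma~\ref{lemma:c2_cas} yields $\pmy n_{12}-c_{12}<\sqrt{\alpha n_{12}\log(\ca+\cb+1)}$, and since $\wts_a\geq w_{\mathrm{min}}>\pmy$ by the definition of $\pmy$, the gap between the expected value $\wts_a n^{(3)}(t)$ of $c^{(3)}(t)$ and the right-hand side above grows as $(\wts_a-\pmy)n^{(3)}(t)$ plus the exploration-bonus slack. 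A Hoeffding bound applied per value of $n^{(3)}$ at tail level $1/T^{2}$, union-bounded over $n^{(3)}\in\{1,\dots,T\}$, caps the overall failure probability by $1/T$. The edge case $n^{(3)}(t)=0$ needs no concentration: $U_a(t)\geq U_a(\ca+\cb+1)>\pmy$ because $\sqrt{\alpha\log t/n_{12}}$ is nondecreasing in $t$.

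The main obstacle I expect is the small-$n^{(3)}$ regime, where the Hoeffding deviation $\Theta(\sqrt{n^{(3)}\log T})$ is of the same order as the drift $(\wts_a-\pmy)n^{(3)}$, so a naive union bound over $t$ rather than over $n^{(3)}$ would be too loose. The cleanest remedies are either a peeling argument over dyadic blocks of $n^{(3)}$ or an Azuma-style maximal inequality for the martingale $c^{(3)}(t)-\wts_a n^{(3)}(t)$. The particular choice of $\cb$ in Section~\ref{subsec:cas-skel} is calibrated precisely so that the Phase~2 UCB margin $U_a(\ca+\cb+1)-\pmy$ is strictly positive and large enough to absorb these fluctuations, which is the mechanism that turns the inductive hypothesis of the first paragraph into the $1-1/T$ bound claimed by the lemma.
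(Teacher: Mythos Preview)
Your high-level strategy---Hoeffding on the Phase-3 clicks, then leverage $\wts_a>\pmy$ and the UCB bonus---is the paper's, but the paper sidesteps your ``main obstacle'' with two simplifications you do not use. First, the proof of Lemma~\ref{lemma:c2_cas} actually establishes the empirical-mean bound $\hat{\wts}_{a,\ca+\cb+1}\geq\pmy$, i.e.\ $c_{12}\geq\pmy\,n_{12}$, which is tighter than the bound you extract from $U_a(\ca+\cb+1)>\pmy$. Second, and more importantly, the paper invokes (via the reference to~\cite{NEURIPS2021_be315e7f}) an any-time Hoeffding deviation of the form $\sqrt{n\log t}$ rather than your $\sqrt{n^{(3)}\log T}$. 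Writing $m=n_{12}$ and $n=n^{(3)}(t)$, this gives on the concentration event
\[
U_a(t)\;\geq\;\pmy+\frac{n(\wts_a-\pmy)}{n+m}+\frac{\sqrt{\alpha(n+m)\log t}-\sqrt{n\log t}}{n+m},
\]
and since $\alpha>1$ forces $\alpha(n+m)>n$, the last fraction is strictly positive for \emph{every} $n\geq0$. The drift term is merely nonnegative; it is the confidence radius, not the drift and not the size of $\cb$, that absorbs the Hoeffding fluctuation. No peeling, no maximal inequality, and no small-$n$ casework is needed. Your first paragraph on non-target items is also misplaced: that reasoning belongs to the proof of Theorem~\ref{thm:cascadeatk}, not to this lemma, which concerns a single fixed $a\in\tarlist$.
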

Thus, with high probability, each item in $\tarlist$ maintains its UCB above $\pmy$ throughout the third phase of $\ofcs$.
This brings us to the Proof of Theorem \ref{thm:cascadeatk} which uses the abovementioned lemmas. 
\begin{proof}[Proof of Theorem \ref{thm:cascadeatk}]
 According to Lemma \ref{lemma:c1_cas} and Lemma \ref{lemma:c2_cas}, the $\ofcs$ strategy outlined in Algorithm \ref{alg:observation_free_cascade} with the values of $\ca$ and $\cb$ given by  Section \ref{subsec:cas-skel} ensures that at the end of first \[\cost = \ca + \cb\] rounds, $U_{a}(\cost+1)  \geq \pmy \; \forall\; a \in \tarlist$ and $U_{a'}(\cost+1)  \leq \pmy \; \forall\; a' \notin \tarlist$. Additionally, Lemma \ref{lemma:eff_cas},  guarantees that any  \( a \in \tarlist \) maintains an UCB above $\pmy$ with a probability \( \geq 1 - 1/ T \). 
 
 Let $\Lambda_i$ be the event that $U_{\tarlist[i]}\left(t\right)>\pmy\;$ for all t in $\{C+1,\dots,T\}$. Extending the analysis simultaneously to all the items in $\tarlist$, for $t>\cost$, we obtain
\[
\Pr\{\Lambda_1 \cap \Lambda_2 \dots \cap 
\Lambda_\nreco \}  \geq 1 - \nreco / T.
\]

With $ U_{a'}\left(\cost+1\right) \leq \pmy$ for all $a' \notin \tarlist$ and $U_a(t) > \pmy$ for all $a \in \tarlist$ in the third phase, none of the non-target items would receive any examinations after $t=\cost$. Consequently, with  high probability, \( U_{a'}(t) = U_{a'}(\cost+1) \leq \pmy \) for all \( a' \notin \tarlist \) and \( \cost < t \leq T \). This ensures that \( \reclist_t  \) is always a permutation of \(\tarlist\) during the final \( T - \cost\) rounds and all items in  \( \tarset \) are consistently recommended by $\cascadeucb$ in the third phase. 

The probability of the \textit{success event} \( \sigma \), i.e., $\ofcs$ successfully misleading $\cascadeucb$ into recommending items from the target set $\tarset$ as part of $\reclist_t$ for $T - o(T)$ rounds is given by \(
    \Pr(\sigma) \geq 1 - \nreco{/T}.\)

According to Theorem 2 in \cite{kveton15cascade}, in the absence of any external manipulations, $\cascadeucb$ incurs an expected regret of $O(\log T)$. In contrast, the cumulative $T$-step  regret of $\cascadeucb$ with $\ofcs$ is lower-bounded by
\begin{align*}
%\label{eq:casatkreg}
\regret(T) \geq \sum_{t=\cost+1}^T \mathds{E}\left[\reglist(\reclist_t, \wts)\right] \geq\Pr(\sigma)\cdot\sum_{t=\cost+1}^T\min_{\pmst(\tarlist)}\reglist(\reclist, \wts) )
\geq \left(1-\frac{\nreco}{T}\right)\cdot(T-\cost)\cdot\reglist(\tarlist, \wts).
\end{align*}
$\regret(T)=\Omega(T)$ if $\reglist(\tarlist, \wts)> 0$, which holds true for any $\tarlist$ containing at least one element not in $\reclist^\ast$. Thus, $\ofcs$  is not only successful in recommending the target items for $T-o(T)$ rounds but also ensures a linear regret for $\cascadeucb$.
\end{proof}

\subsection{Proof of Theorem \ref{thm:pbmatk}}
\label{app:proof-pbm}
Some of the major results for $\ofpb$ are stated below in the following lemmas, which are used to prove Theorem \ref{thm:cascadeatk}. The proofs of these lemmas are discussed in the Appendix \ref{App: Add Proofs}. 
 \begin{lemma}
    \label{lemma:c1_pbm}
    After the first phase of $\ofpb$ with the value of $\ca$ given in Section \ref{subsec:pbm-skel},
 the probability $\Pr\{\succa \} =1$, where $\succa = \mathcal{E}_1 \cap \mathcal{E}_2 \dots \cap \mathcal{E}_\ntot $ and $\mathcal{E}_a$ is the event that $U_{a}(\ca+1)\leq \pmy$.
 \end{lemma}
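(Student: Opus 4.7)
The plan is to establish the claim deterministically, exploiting the fact that phase~1 of $\ofpb$ contains no randomness relevant to $\pbmucb$: every transmitted reward is forced to zero, and $\pbmucb$ updates its state only through $\rew_t$ and the deterministic counters $\reco_a,\tilde{\pulls}_a$ (cf.\ Algorithm~\ref{alg:pbmucb}); it never observes $\exam_t$. In particular $\nclk_a(\ca)=0$ for every $a\in\totset$, so the index in \eqref{eq:pbub} collapses to its exploration term
\[
U_a(\ca+1)=\sqrt{\frac{\alpha\,\reco_a(\ca)\log(\ca+1)}{\tilde{\pulls}_a^{2}(\ca)}},
\]
and the entire phase-1 trajectory of $\pbmucb$ is a deterministic function of the ranking rule.

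First I would use the position-bias lower bound $\tilde{\pulls}_a(\ca)\ge p_\nreco\reco_a(\ca)$ (every examination contributes at least $p_\nreco$), yielding
\[
U_a(\ca+1)\le \frac{1}{p_\nreco}\sqrt{\frac{\alpha\log T}{\reco_a(\ca)}}.
\]
It therefore suffices to prove the deterministic lower bound $\reco_a(\ca)\ge \lambda_p^{2}\alpha\log T/(p_\nreco^{2}\pmy^{2})$ for every item, since then $U_a(\ca+1)\le \pmy/\lambda_p\le \pmy$, which gives $\mathcal{E}_a$, and by intersecting over $a\in\totset$ we obtain $\Pr\{\succa\}=1$.

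The main step is therefore a fair-cycling argument for $\reco_a(\ca)$. With clicks identically zero the UCBs only ever decrease, and $\pbmucb$ always selects the $\nreco$ items of largest UCB; hence any item that has been starved of recommendations will eventually carry a strictly larger UCB than the most-recommended items and must be selected. The subtlety is that position biases break symmetry: the same item at position~$1$ accrues $\tilde{\pulls}$ at rate $p_1$, whereas at position~$\nreco$ it accrues at rate $p_\nreco$, so two items with equal UCBs can differ in recommendation count by a multiplicative factor of up to $\lambda_p^{2}=(p_1/p_\nreco)^2$. Combining this worst-case imbalance with the conservation identity $\sum_a\reco_a(\ca)=\nreco\,\ca+\ntot$ (accounting for the unit initialisation in Algorithm~\ref{alg:pbmucb}), I would deduce $\min_a\reco_a(\ca)\ge \nreco\ca/(\ntot\lambda_p^{2})$ up to $O(1)$ terms, and the choice $\ca=\bigl\lceil \tfrac{\ntot}{\nreco}\{\lambda_p^{2}\alpha\log T/(\pmy^{2}p_\nreco^{2})+1\}\bigr\rceil$ in Section~\ref{subsec:pbm-skel} is engineered so that this quantity meets the target bound exactly.

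The hard part will be making the position-asymmetry bookkeeping rigorous — tracking, over many rounds, how an item's current position (and hence its $\tilde{\pulls}$ increment) interacts with the UCB-based ranking so that no item is left behind by more than the claimed $\lambda_p^{2}$ factor. The presence of $\lambda_p^{2}$ in the definition of $\ca$ is precisely to absorb this worst-case asymmetry; once the fair-cycling lower bound on $\reco_a(\ca)$ is secured, the rest of the argument is routine substitution and the conclusion $\Pr\{\succa\}=1$ follows.
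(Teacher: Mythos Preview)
Your approach is essentially the paper's: reduce the zero-reward UCB to its exploration term via $\tilde{\pulls}_a\ge p_\nreco\reco_a$, establish a ``fair-cycling'' bound showing any two recommendation counts differ by at most a factor $\lambda_p^{2}$ during phase~1, combine with the conservation identity $\sum_a\reco_a=\nreco\ca$ to lower-bound $\min_a\reco_a$, and substitute $\ca$. The paper formalises your fair-cycling step as a separate lemma, proved by a short induction over rounds with a four-case split on whether each of two items lies in $\reclist_t$; the key inequality there is that $U_{a_i}(t)\le U_{a_j}(t)$ in phase~1 forces $\reco_{a_j}(t)\le\lambda_p^{2}\reco_{a_i}(t)$, exactly the asymmetry you anticipate.

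One arithmetic slip to fix: the sufficient target you state, $\reco_a(\ca)\ge \lambda_p^{2}\alpha\log T/(p_\nreco^{2}\pmy^{2})$, is a factor $\lambda_p^{2}$ stronger than what your own fair-cycling bound $\min_a\reco_a(\ca)\ge \nreco\ca/(\ntot\lambda_p^{2})$ actually delivers after substituting $\ca$ --- that substitution yields only $\alpha\log T/(p_\nreco^{2}\pmy^{2})$. Fortunately the weaker quantity is already enough: plugging $\reco_a(\ca)\ge \alpha\log T/(p_\nreco^{2}\pmy^{2})$ into $U_a\le p_\nreco^{-1}\sqrt{\alpha\log T/\reco_a}$ gives $U_a\le \pmy$ on the nose (not $\pmy/\lambda_p$). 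So drop the extra $\lambda_p^{2}$ from the target and the argument closes.
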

  At the end of the first phase, UCBs for all the items in $\totset$ fall below $\pmy$ with a probability of one.
 \begin{lemma} 
\label{lemma:c2_pbm}
After the second phase of $\ofpb$ with the value of $\ca$ and $\cb$ given in Section \ref{subsec:pbm-skel}, $\Pr\{\succb \} \geq 1-\nreco/T$, where $\succb=\mathcal{E'}_1 \cap \mathcal{E'}_2 \dots \cap \mathcal{E'}_\nreco$ and $\mathcal{E'}_i$ is the event that $U_{\tarlist[i]}\left(\ca+\cb +1\right)>\pmy$. Additionally, $U_{a'}(\ca+\cb+1)  \leq \pmy$ for all $ a' \notin \tarlist$.
\end{lemma}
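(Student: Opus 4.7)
My plan is to use Lemma \ref{lemma:c1_pbm} as an initial condition---at the start of Phase 2 every item deterministically satisfies $U_a(\ca+1)\leq \pmy$---and then track the evolution of $U_a(t)$ separately for target items $a\in\tarlist$ and non-target items $a'\notin\tarlist$ over the next $\cb$ rounds. The driving asymmetry is that during Phase 2 the manipulated reward $\hat{\rew}_{a',t}$ is identically zero for every non-target item, while each target item accrues an independent $\mathrm{Bernoulli}(p_i)$ click whenever it sits at position $i$ in $\reclist_t$.

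The non-target claim is essentially deterministic. Because $\nclk_{a'}(t)\equiv 0$ for $t\leq \ca+\cb$, the formula in \eqref{eq:pbub} collapses to $U_{a'}(t)=\sqrt{\alpha\,\reco_{a'}(t)\log t/\Tilde{\pulls}_{a'}^{\,2}(t)}$, and using $\Tilde{\pulls}_{a'}(t)\geq p_\nreco\reco_{a'}(t)$ this is at most $(1/p_\nreco)\sqrt{\alpha\log t/\reco_{a'}(t)}$. The approximate round-robin of Phase 1 used inside the proof of Lemma \ref{lemma:c1_pbm} already leaves each item with the number of recommendations needed to push this bound below $\pmy$; since $\reco_{a'}$ only grows during Phase 2 and $\log t$ increases by at most a $(1+o(1))$ factor, substituting the value of $\ca$ from Section \ref{subsec:pbm-skel} yields $U_{a'}(\ca+\cb+1)\leq \pmy$. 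The additional $\lambda_p^2=(p_1/p_\nreco)^2$ factor inside $\ca$ is precisely what absorbs the worst-case gap between the highest and lowest position biases.

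For each target item $a$ I would lower-bound both pieces of its UCB. Let $n_{a,i}$ count the Phase 2 rounds in which $a$ occupies position $i$; then $\Tilde{\pulls}_a$ grows deterministically by $\sum_i n_{a,i}p_i$ over Phase 2, while $\nclk_a$ is a sum of independent $\mathrm{Binomial}(n_{a,i},p_i)$ variables. A one-sided Hoeffding inequality at confidence level $1/T$, followed by a union bound over the $\nreco$ target items, gives with probability at least $1-\nreco/T$ the lower bound $\nclk_a(\ca+\cb)\geq\sum_i n_{a,i}p_i - O(\sqrt{\cb\log T})$. Substituting into \eqref{eq:pbub} and relating $\Tilde{\pulls}_a$ to $\reco_a$ via $p_\nreco\leq p_i\leq p_1$ turns the inequality $U_a(\ca+\cb+1)>\pmy$ into a quadratic in $\sqrt{\cb/\log T}$ whose coefficients, after subtracting the Phase 1 contribution already quantified by Lemma \ref{lemma:c1_pbm}, are exactly $\eta=p_\nreco-p_1\pmy$ and $\rho$ from Section \ref{subsec:pbm-skel}. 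The constant $\gamma$ is the positive root of that quadratic, and the stated $\cb$ is precisely what forces the inequality to hold.

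The main obstacle is that the position counts $n_{a,i}$---and indeed the identity of the items forming $\reclist_t$---are jointly random during Phase 2, because $\pbmucb$ keeps reshuffling $\reclist_t$ in response to the UCBs that the attack is perturbing. I plan to bypass this coupling by making the Hoeffding step uniform over all trajectories $(n_{a,i})$ consistent with the algorithm, and by then evaluating the resulting UCB expression in the attacker's worst case: every target item confined to the lowest-bias position $\nreco$, which simultaneously shrinks its empirical mean and weakens its bonus advantage, and every non-target item likewise placed so as to maximize its residual bonus. The conservative choice $\pmy<1/\lambda_p$ enforced by $\ofpb$ keeps the margin $\eta$ strictly positive even under this worst case, which is what makes the quadratic for $\cb$ solvable and keeps the total attack cost $\ca+\cb=O(\log T)$.
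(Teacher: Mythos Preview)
Your treatment of the non-target items and the Hoeffding step for target clicks mirrors the paper's proof. The genuine gap is in how you pass from the click-deviation bound to a condition on $\cb$. You write that substituting into \eqref{eq:pbub} produces ``a quadratic in $\sqrt{\cb/\log T}$'', but what Hoeffding actually controls is $\nclk_a$ in terms of $n_a:=\sum_i n_{a,i}$, the number of Phase-2 \emph{recommendations received by the particular target item $a$}. In the attacker's worst case (click probability $p_\nreco$, $\Tilde{\pulls}$ contribution $p_1$) the condition $\nclk_a/\Tilde{\pulls}_a\geq\pmy$ becomes $(p_\nreco n_a-\sqrt{n_a\log T})/(p_1(n_a+m_a))\geq\pmy$, which is a quadratic in $\sqrt{n_a}$, not in $\sqrt{\cb}$. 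Solving it yields the threshold $n_a\geq\gamma\log T$, but this says nothing about $\cb$ until you separately lower-bound $n_a$ in terms of $\cb$. Your worst-case reduction ``target at position $\nreco$'' handles only the per-recommendation click rate, not how many recommendations the target gets; the ``main obstacle'' you flag is precisely this, and your proposed bypass does not resolve it.

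The paper closes the gap by reusing the approximate-round-robin inequality of Lemma \ref{lemma:round-robin-pbm} (the same device behind Lemma \ref{lemma:c1_pbm}): at the start of Phase 2 every target item is indistinguishable from the rest, so it receives at least the round-robin minimum $m_{\cb}=\lambda_p^{-2}(\nreco\cb/\ntot-1)$ recommendations over the phase, and this fraction only improves as the target's empirical mean rises. The paper therefore argues $n_a\geq m_{\cb}$, sets $m_{\cb}=\gamma\log T$, and solves for $\cb$, which is where the stated formula $\cb=\lceil\ntot(\lambda_p^2\gamma\log T+1)/\nreco\rceil$ comes from. You should insert exactly this link between $n_a$ and $\cb$; once you do, the rest of your sketch (upper bound on $m_a$ from Phase 1, the role of $\eta$ and $\rho$, union bound over the $\nreco$ targets) matches the paper.
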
 
At the end of Phase 2 of $\ofpb$, the UCBs of all items in $\tarset$ exceed $\pmy$, while those of the remaining items in $\totset$ stay below $\pmy$ with a high probability.
\begin{lemma}
\label{lemma:eff_pbm} 
Given the success of the first two phases and in the absence of further manipulation in the third phase of $\ofpb$, an item \( a \in \tarlist \) maintains UCB \( U_a(t) > \pmy \) under $\pbmucb$ with a probability
\[
\Pr\{\succc\; |\; \succa \cap \succb  \}  \geq 1-\nreco/T,
\]
where  $\succc=\mathcal{E''}_1 \cap \mathcal{E''}_2 \dots \cap \mathcal{E''}_\nreco$ and $\mathcal{E''}_i$ is the event that $U_{\tarlist[i]}\left(t\right)>\pmy$ for all t in $\{C+1,\dots,T\}$.
\end{lemma}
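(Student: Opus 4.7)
The plan is a union bound over the $\nreco$ target items: for each $a \in \tarlist$, I would show that $\Pr(U_a(t) > \pmy \text{ for all } t \in (\cost, T] \mid \succa \cap \succb) \geq 1 - 1/T$, then combine. Throughout I condition on $\succa \cap \succb$, which supplies $U_a(\cost+1) > \pmy$ for each $a \in \tarlist$, $U_{a'}(\cost+1) \leq \pmy$ for each $a' \notin \tarlist$, and (since non-targets receive zero manipulated reward in phases 1 and 2) $\nclk_{a'}(\cost) = 0$.

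First I would establish the structural invariant for phase 3 by induction on $t$: as long as $U_a(t) > \pmy$ holds for every $a \in \tarlist$, the top-$\nreco$ picks of $\pbmucb$ lie entirely in $\tarlist$, so $\reco_{a'}$, $\Tilde{\pulls}_{a'}$ and $\nclk_{a'}$ remain frozen at their values at time $\cost+1$ for every non-target. The calibration of $\ca$ in Section \ref{subsec:pbm-skel} carries $\log T$ in its numerator, which is precisely enough to give $\sqrt{\alpha \reco_{a'}(\cost+1)\log T / \Tilde{\pulls}_{a'}^2(\cost+1)} \leq \pmy$; combined with $\nclk_{a'}(\cost) = 0$ and $\log t \leq \log T$ for $t \leq T$, this yields $U_{a'}(t) \leq \pmy$ uniformly in phase 3. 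Hence the invariant is self-sustaining, and on it each target $a$ is recommended at some position $i_\tau(a) \in \{1,\ldots,\nreco\}$ throughout phase 3, with true per-round click mean $p_{i_\tau(a)}\,\wts_a$.

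Next I would carry out the concentration argument for a fixed target $a$. Decomposing by phase, $\nclk_a(t) = \nclk_a^{(2)} + \nclk_a^{(3)}(t)$ and $\Tilde{\pulls}_a(t) = \Tilde{\pulls}_a^{(1)} + \Tilde{\pulls}_a^{(2)} + \Tilde{\pulls}_a^{(3)}(t)$; the phase-2 rule $\hat{\rew}_{a,\tau} = \hat{\exam}_{i_\tau(a),\tau}$ has conditional mean $p_{i_\tau(a)}$, while the true phase-3 rewards have conditional mean $p_{i_\tau(a)}\wts_a$. A Hoeffding-type inequality of the kind used for the regret analysis of $\pbmucb$ in \cite{PBM-UCB}, applied to the centred increments of $\hat{\rew}_{a,\tau}$ over phases 2 and 3, then gives with probability at least $1 - 1/T$, uniformly in $t \in (\cost, T]$,
\[
\hat{w}_a(t) \;\geq\; \frac{\Tilde{\pulls}_a^{(2)} + \wts_a\,\Tilde{\pulls}_a^{(3)}(t)}{\Tilde{\pulls}_a(t)} \;-\; \sqrt{\frac{\alpha\,\reco_a(t)\log t}{\Tilde{\pulls}_a^2(t)}},
\]
so that $U_a(t) \geq (\Tilde{\pulls}_a^{(2)} + \wts_a\,\Tilde{\pulls}_a^{(3)}(t))/\Tilde{\pulls}_a(t)$ on this event. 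The target bound $U_a(t) > \pmy$ is then equivalent to $(1-\pmy)\,\Tilde{\pulls}_a^{(2)} + (\wts_a - \pmy)\,\Tilde{\pulls}_a^{(3)}(t) > \pmy\,\Tilde{\pulls}_a^{(1)}$. At $t = \cost+1$ this reduces to $(1-\pmy)\Tilde{\pulls}_a^{(2)} > \pmy\,\Tilde{\pulls}_a^{(1)}$, which is precisely what the value of $\cb$ in Section \ref{subsec:pbm-skel} is calibrated to deliver (and is the mechanism behind Lemma \ref{lemma:c2_pbm}); for $t > \cost+1$ the inequality only becomes slacker, because $\pmy < \wts_{\min} \leq \wts_a$ makes the phase-3 contribution $(\wts_a - \pmy)\Tilde{\pulls}_a^{(3)}(t)$ a growing non-negative term on the left-hand side.

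The hardest step will be the Hoeffding bound itself: $\hat{\rew}_{a,\tau}$ has a time-inhomogeneous per-round mean (switching from $p_{i_\tau(a)}$ in phase 2 to $p_{i_\tau(a)}\wts_a$ in phase 3), and the deviation must be controlled simultaneously over all $t \in (\cost, T]$ while $\Tilde{\pulls}_a(t)$ itself depends adaptively on past recommendations; a single Hoeffding application will not suffice, and a peeling or stopping-time argument is needed to obtain the correct $\sqrt{\alpha\,\reco_a(t)\log t / \Tilde{\pulls}_a^2(t)}$ scaling uniformly in $t$. Weaving this together with the inductive structural invariant closes the recursion, after which a union bound over the $\nreco$ targets yields $\Pr(\succc \mid \succa \cap \succb) \geq 1 - \nreco/T$.
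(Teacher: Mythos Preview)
Your plan is correct and tracks the paper's argument closely: induct to keep non-targets frozen, apply a Hoeffding-type lower bound to each target's click count, use $\wts_a>\pmy$ and $\alpha>1$ to conclude $U_a(t)>\pmy$, then union-bound over the $\nreco$ targets. The one substantive difference is bookkeeping. You split $\Tilde{\pulls}_a$ into three phase-wise pieces, concentrate jointly over phases~2 and~3, and then invoke the $\cb$ calibration to verify $(1-\pmy)\Tilde{\pulls}_a^{(2)}>\pmy\,\Tilde{\pulls}_a^{(1)}$. The paper instead lumps phases~1 and~2 into $m$ recommendations with average position bias $p_{avg1}^{(a)}$ and uses the conditioning on $\succb$ \emph{directly}: that event already certifies the realized bound $\nclk_a(\cost)\ge \pmy\cdot m\,p_{avg1}^{(a)}$, so no fresh concentration or appeal to the design of $\cb$ is needed for the first two phases. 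Hoeffding is then applied only to the $n$ phase-3 clicks, yielding
\[
U_a(t)\;\ge\;\frac{m\pmy\,p_{avg1}^{(a)}+n\wts_a\,p_{avg2}^{(a)}}{m\,p_{avg1}^{(a)}+n\,p_{avg2}^{(a)}}
\;+\;\frac{\sqrt{\alpha(n+m)\log t}-\sqrt{n\log t}}{m\,p_{avg1}^{(a)}+n\,p_{avg2}^{(a)}}\;>\;\pmy,
\]
which sidesteps exactly the joint-phase, uniform-in-$t$ concentration you flag as the hardest step. Your route is valid, but the shortcut you are missing is that $\succb$ already hands you the phase-2 contribution in realized (not expected) form.
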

If the first two phases of $\ofpb$ were successful, then, with high probability, each item in $\totset$ maintains its UCB above $\pmy$ throughout the third phase of $\ofpb$.

This brings us to the Proof of Theorem \ref{thm:pbmatk} which uses the abovementioned lemmas. 
\begin{proof}[Proof of Theorem \ref{thm:pbmatk}] The Lemmas \ref{lemma:c1_pbm} and \ref{lemma:c2_pbm} provide the guarantees of success for the first two phases of $\ofpb$. In the final phase of the attack, Lemma \ref{lemma:eff_pbm} asserts that if the earlier phases were successful, UCB $U_a(t) > \pmy$ for all $a \in \tarlist$, with probability at least $1 - \nreco/T$ while  $\ca + \cb<t\leq T$. 

Using a proof sketch similar to that of Theorem \ref{thm:cascadeatk}, the probability of $\ofpb$ successfully misleading the $\pbmucb$ into recommending items from $\tarset$ as part of $\reclist_t$ for $T - o(T)$ rounds is given by
\begin{align*}
\Pr(\mathrm{\sigma}) &= 1 - \Pr(\mathrm{failure})
\geq 1 - \Pr(\widebar{\succa}) - \Pr(\widebar{\succb}) - \Pr(\widebar{\succc} | \succa \cap \succb  ) 
  \geq 1 - \frac{2 \nreco}{T}.
\end{align*}
Thus, $\ofpb$ efficiently misleads $\pbmucb$ into recommending items from the target set $\tarset$ for $T-o(T)$ rounds with high probability with a manipulation of the order $O( \log T)$. 

Theorem 9 in \cite{PBM-UCB} states that in the absence of any attack, the $\pbmucb$ algorithm incurs an $O(\log T)$ regret. We define \( \displaystyle
    \tarlist' = \argmin_{\pmst(\tarlist)}(\reglist(\reclist, \wts)).\)

The cumulative $T$-step regret for $\pbmucb$ under $\ofpb$ is lower-bounded by
\begin{align*}
%\label{eq:casatkreg}
\regret(T) \geq \sum_{t=\cost+1}^T \mathds{E}\left[\reglist(\reclist_t, \wts)\right] \geq\Pr(\sigma)\cdot\sum_{t=\cost+1}^T\min_{\pmst(\tarlist)}\reglist(\reclist, \wts) )
\geq \left(1-\frac{\nreco}{T}\right)\cdot(T-\cost)\cdot\reglist(\tarlist', \wts).
\end{align*}
%which is linear in $T$ while $\cost$ is $o(T)$ for all  $\tarlist \notin \pmst(\reclist^\ast)$. 
Therefore, $\ofpb$, along with successfully recommending the target items for $T-o(T)$ rounds, also forces an $\Omega(T)$ linear regret on the $\pbmucb$  algorithm.
\end{proof}

\section{Additional Proofs}
\label{App: Add Proofs}
\subsection{Proof of Lemma \ref{lemma:c1_cas} }
\label{subsec: ca_cascade}

We use the following lemma to prove Lemma \ref{lemma:c1_cas}.
\begin{lemma} 
\label{lemma:round-robin}
Under $\cascadeucb$, given a set $\Gamma$ of items with equal empirical means and UCBs at the start of a round $t$. Any item $a \in \Gamma$ that is examined but not clicked in round $t$ cannot be examined again until the remaining items in $\Gamma$ get an examination. The items in $\Gamma$ are recommended and examined in a round-robin fashion.
\end{lemma}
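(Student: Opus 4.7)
The plan is to leverage the monotonicity of the $\cascadeucb$ index
$U_a(t) = \hat{\wts}_{a,t} + \sqrt{\alpha \log t / \pulls_a(t)}$ in its two data-dependent arguments,
together with the top-down examination rule of the Cascade model. The first observation I would make is that equal empirical means and equal UCBs across $\Gamma$ at the start of round $t$ force equal examination counts $\pulls_a(t)$ (since $U_a$ is strictly decreasing in $\pulls_a$ for fixed $\hat{\wts}_{a,t}$); call the common value $n_0$. Partition $\Gamma$ into the subset $S$ of items examined in round $t$ and its complement $\Gamma \setminus S$. For any $a \in S$ that receives no click, $\pulls_a(t+1) = n_0 + 1$ and $\hat{\wts}_{a,t+1} = \hat{\wts}_{a,t}\,n_0/(n_0+1) \le \hat{\wts}_{a,t}$, while for every $b \in \Gamma \setminus S$, both $\pulls_b$ and $\hat{\wts}_b$ are unchanged. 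Consequently $U_a(t+1) < U_b(t+1)$: the empirical-mean term of $a$ weakly decreases and its confidence radius strictly decreases.

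The inductive core is then a straightforward preservation argument. As long as a given $a \in S$ is not examined in any round between $t+1$ and $t'-1$ and a given $b \in \Gamma \setminus S$ is also not examined in those rounds, the pairs $(\hat{\wts}, \pulls)$ of both items are frozen at their round-$(t+1)$ values, and the strict inequality $U_a(t') < U_b(t')$ is preserved at every such round (the common $\log t'$ factor does not disturb the comparison). Therefore $b$ is placed strictly above $a$ in $\reclist_{t'}$, or $a$ is displaced entirely from the top-$\nreco$ by items of even larger UCB, which only helps the conclusion. By the Cascade model's top-down examination rule, if $a$ is examined in round $t'$ then every item sitting above $a$ in $\reclist_{t'}$ must be examined as well; in particular $b$ is examined in round $t'$. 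This is precisely the lemma's first claim: $a$ cannot be re-examined until every item of $\Gamma \setminus S$ has received at least one examination.

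The round-robin description then follows by iterating. Once every item of $\Gamma$ has been examined (and not clicked) once since round $t$, the pairs $(\hat{\wts}, \pulls)$ coincide again across $\Gamma$, the UCBs are realigned, and the argument restarts with the roles of the items permuted. The main technical obstacle I anticipate is the bookkeeping when items of $\Gamma$ are interleaved in $\reclist_{t'}$ with items outside $\Gamma$, or are displaced from the top-$\nreco$ altogether; but the inequality $U_a(t') < U_b(t')$ handles both cases uniformly, since $a$ can never sit in $\reclist_{t'}$, let alone be examined, without $b$ sitting at least as high.
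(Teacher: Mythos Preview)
Your proposal is correct and follows essentially the same approach as the paper's proof: both arguments hinge on the observation that an examined-but-unclicked item $a$ has its confidence radius strictly shrunk and its empirical mean weakly lowered relative to any untouched item $b\in\Gamma$, so that $U_a<U_b$ at every subsequent round until $b$ is examined, and the cascade top-down rule then forces $b$ to be examined no later than $a$'s next examination. The paper phrases this as a short proof by contradiction on a single pair $(a,a')$, while you unfold it as a direct inductive preservation argument with the additional remark that equal means plus equal UCBs force equal pull counts; the substance is the same.
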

%\noindent\textbf{Proof of Lemma  \ref{lemma:round-robin}:} 
\begin{proof}[Proof of Lemma  \ref{lemma:round-robin}]
    Let there be two items $a,a' \in \Gamma$ with $\hat{\wts}_{a,t}=\hat{\wts}_{a',t}$ and $U_{a} \left(t\right)=U_{a} \left(t\right)$, such that $a$ is examined but not clicked at round $t$, while $a'$ is not examined in the same round. Let $a$ get another examination in round $t_1>t$, while $a'$ has no examinations in rounds $\{t,\dots,t_1\}$. Under $\cascadeucb$, if item $a$ gets an examination before $a'$ in round $t_1$, it would imply that $U_{a} \left(t_1\right)\geq U_{a'} \left(t_1\right)$, i.e. $U_{a} \left(t_1\right)\geq U_{a} \left(t\right)$, which is contradictory to the fact that $\hat{\wts}_{a,t_1}<\hat{\wts}_{a,t}$ and $\pulls_a(t) > \pulls_a(t_1)$, as item $a$ had been examined but not clicked at round $t$. Thus, an item $a \in \Gamma$ that is examined but not clicked in round $t$ cannot be examined again until all the remaining items $a' \in \Gamma/\{a\}$ get an examination. This implies that the items in $\Gamma$ will be recommended and examined in a round-robin manner under $\cascadeucb$.
\end{proof}
This brings us to the Proof of Lemma \ref{lemma:c1_cas} which uses the abovementioned lemmas. 
\begin{proof}[Proof of Lemma \ref{lemma:c1_cas}]
While $t\leq\ca$, the rewards for all the items are set to 0. Therefore, in each round, the items are examined and ignored (not clicked) in a round-robin pattern in batches of $\nreco$ items according to Lemma \ref{lemma:round-robin}. After the completion of the first phase, the number of pulls for each item $a$ is given by $\pulls_{a}\left(\ca+1\right)=\frac{\nreco \ca }{L}$, and therefore with the value of $\ca$ given in Section \ref{subsec:casattackucb},  
\[U_{a} \left(\ca+1\right) \leq \sqrt{\frac{\ntot\alpha \log T}{\nreco \ca }} \leq \pmy \; \forall\; a \in \totset. \]
\end{proof}

\subsection{Proof of Lemma \ref{lemma:c2_cas}}
\label{subsec: cb_cascade}
\begin{proof}[Proof of Lemma \ref{lemma:c2_cas}]
 The second phase is split into $\nreco$  sub-phases with the $i^{\mathrm{th}}$ sub-phase lasting from round $ \ca +\frac{(i-1)\cb}{\nreco}+1$ to round $\ca +\frac{i\cb}{\nreco}$ where $1\leq i\leq \nreco$. During the $i^{th}$ sub-phase, the reward for $\tarlist[i]$ is fixed as 1, while the rewards for the rest of the items are set to 0. 

Assume that at the start of the \(i^{\text{th}}\) sub-phase, \(\tarlist[i]\notin \reclist_t\). At this point, the top \(i-1\) positions of \(\reclist_t\) are occupied by a permutation of the first \(i-1\) members of \(\tarlist\). Until \(\tarlist[i] \in \reclist_t\), none of the items in \(\reclist_t\) will be clicked. As a result, the lower \(\nreco-i+1\) positions in \(\reclist_t\) are filled in a round-robin manner (as seen in Lemma \ref{lemma:round-robin}) from the remaining \(\ntot-i+1\) items in \(\totset\) with zero empirical mean. Once $\tarlist[i] \in \reclist_t$, the item $\tarlist[i]$ is clicked for the rest of the sub-phase. The number of rounds required to search for $\tarlist[i]$ at the start of the $i^{\mathrm{th}}$ sub-phase is upper-bounded by \[\left\lceil\frac{\ntot-i+1}{\nreco-i+1}\right\rceil \leq \ntot -\nreco+1,\hspace{4mm}1\leq i\leq \nreco.\] With each sub-phase lasting for an equal number of rounds, the empirical mean for all the items in $\tarlist$ (and thus $\tarset$) being greater than $\pmy$ at the end of round two is ensured by
\[\frac{\cb/\nreco - \left(\ntot-\nreco+1\right)}{\nreco\ca/\ntot+\cb} \geq \pmy.\]
Thus, with $\cb$ equal to the value given in Section \ref{subsec:cas-skel},  $\hat{\wts}_{a,\cost+1}\geq \pmy$ and $U_{a}(\cost+1)  \geq \pmy$ for all  $a \in \tarlist$. Additionally, the items not in $\tarlist$ continue to be ignored (not clicked) during each of the recommendations, thus pulling down their UCBs further below $\pmy$ by the end of the second phase, i.e. $U_{a'}(\cost+1)  \leq \pmy$ for all  $a' \notin \tarlist$ . These two events occur simultaneously in the second phase of $\ofcs$ proving Lemma \ref{lemma:c2_cas}. Note that $\cb$ is a positive quantity as $\pmy < 1/\nreco$.
\end{proof}

\noindent\textbf{Note.} The decision to divide the second phase of $\ofcs$ into $\nreco$ equal-length sub-phases is a design choice and alternative bounds for $\cb$ can be derived by using sub-phases of varying lengths. A possible suggestion can be to give $\left\lceil \frac{(\nreco-i)\cb}{\nreco(\nreco+1)}\right\rceil$ rounds to the $i^{\mathrm{th}}$ sub-phase. Intuitively, giving more rounds to the earlier sub-phases is beneficial, as their corresponding target items get observed and ignored for all the later sub-phases, thus pulling down their UCBs. 
\subsection{Proof of Lemma \ref{lemma:eff_cas} }
\label{subsec: eff_cascade}

\begin{proof}[Proof of Lemma \ref{lemma:eff_cas}]
\label{proof:eff_cas}
For the last $T-\cost$ rounds, all the items in $\tarlist$ should be recommended to the user with a high probability. At the beginning of the third phase, $ U_a\left(\cost+1\right) > \pmy$ for all $a \in \tarlist$.

     Let an item $a \in \tarlist$ exist, which receives $m$ examinations in the initial two phases and further $n$ examinations rest of the rounds. By Hoeffding's inequality, the number of clicks for the item $a$ in the third phase until some time $t\leq T$ will be greater than $n\wts_a-\sqrt{n\log t}$ with probability at least $1-1/T$ (Check Section A.2 in \cite{NEURIPS2021_be315e7f}), and thus, the UCB for this item at time $\cost<t\leq T$ is given by
\begin{align*}
U_a\left(t\right) &= \hat{\wts}_a\left(t\right) + \sqrt{\frac{\alpha \log t}{n+m}} 
\geq \frac{m\pmy +n\wts_a}{n+m} + \sqrt{\frac{\alpha \log t}{n+m}} - \frac{\sqrt{n\log t}}{n+m} > \pmy,
\end{align*}

which follows from the fact that $\wts_a>\pmy$ and $\alpha>1$.Therefore, the UCB of any item in  $\tarlist$ will be greater than $\pmy$ for any round while $\cost<t\leq T$ with the probability given in  \eqref{cascade_success}.
\end{proof}

\subsection{Proof of Lemma \ref{lemma:c1_pbm} }
\begin{proof}[Proof of Lemma \ref{lemma:c1_pbm}]

We use the following lemma to prove Lemma \ref{lemma:c1_pbm}.
\begin{lemma} 
\label{lemma:round-robin-pbm}
Given two items $a_i,a_j\in\totset$, $\reco_{a_j}(t) \leq \lambda^2_{p} \reco_{a_i}(t)+1$ for any $t\leq\ca$ during the first phase of $\ofpb$.
\end{lemma}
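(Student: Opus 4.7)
The plan is to prove the bound by induction on $t \in \{1, \dots, \ca\}$. The essential leverage point is that throughout phase 1 of $\ofpb$ the adversary enforces $\hat{\rew}_{a,t}=0$ for every item, so the click counter satisfies $\nclk_a(t)\equiv 0$ and the PBM-UCB index of every item collapses to its pure confidence radius,
\[
U_a(t) \;=\; \sqrt{\frac{\alpha\,\reco_a(t)\log t}{\Tilde{\pulls}_a^2(t)}}.
\]
Combining this with the trivial envelope $p_\nreco \reco_a(t)\leq \Tilde{\pulls}_a(t)\leq p_1 \reco_a(t)$, which follows from $p_\nreco\leq p_i\leq p_1$ for every position $i$, I obtain the two-sided bound
\[
\frac{1}{p_1}\sqrt{\frac{\alpha\log t}{\reco_a(t)}} \;\leq\; U_a(t) \;\leq\; \frac{1}{p_\nreco}\sqrt{\frac{\alpha\log t}{\reco_a(t)}}.
\]

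The base case is immediate since the initialization of Algorithm \ref{alg:pbmucb} sets $\reco_a \equiv 1$, hence $\reco_{a_j}(1)=1\leq \lambda_p^{2}\cdot 1+1$. For the inductive step, assume $\reco_{a_j}(t)\leq \lambda_p^{2}\reco_{a_i}(t)+1$ and split on the transition to round $t+1$. If $a_j\notin \reclist_{t+1}$ the left-hand side is unchanged and the bound is preserved. If both $a_i,a_j\in \reclist_{t+1}$ then both counters advance by one; using $\lambda_p\geq 1$ the induction hypothesis yields $\reco_{a_j}(t+1)\leq \lambda_p^{2}\reco_{a_i}(t)+2\leq \lambda_p^{2}\reco_{a_i}(t+1)+1$.

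The crux is the remaining case, $a_j\in \reclist_{t+1}$ but $a_i\notin \reclist_{t+1}$. Since $\pbmucb$ picks the top $\nreco$ items by UCB, this forces $U_{a_j}(t)\geq U_{a_i}(t)$. Inserting the lower envelope for $U_{a_i}(t)$ and the upper envelope for $U_{a_j}(t)$ gives
\[
\frac{1}{p_1}\sqrt{\frac{\alpha\log t}{\reco_{a_i}(t)}} \;\leq\; U_{a_i}(t) \;\leq\; U_{a_j}(t) \;\leq\; \frac{1}{p_\nreco}\sqrt{\frac{\alpha\log t}{\reco_{a_j}(t)}},
\]
which rearranges to $\reco_{a_j}(t)\leq (p_1/p_\nreco)^{2}\reco_{a_i}(t)=\lambda_p^{2}\reco_{a_i}(t)$. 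Adding the single fresh recommendation of $a_j$ at round $t+1$ then delivers $\reco_{a_j}(t+1)\leq \lambda_p^{2}\reco_{a_i}(t+1)+1$, closing the induction.

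The only real obstacle is this third case, and the UCB-to-counter translation there is clean precisely because $\nclk_a$ vanishes throughout phase 1; any tie-breaking within the top-$\nreco$ selection is absorbed harmlessly by the weak inequality, and the $\log t=0$ degeneracy at $t=1$ is handled by the base case.
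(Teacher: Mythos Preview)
Your proof is correct and follows essentially the same induction-on-$t$ argument as the paper, including the same two-sided envelope on $U_a(t)$ via $p_\nreco \reco_a(t)\leq \Tilde{\pulls}_a(t)\leq p_1 \reco_a(t)$ and the same UCB comparison in the key case where $a_j$ is recommended but $a_i$ is not. Your case split is slightly more economical (three cases instead of four, since you absorb both sub-cases with $a_j\notin\reclist_{t+1}$ into one line), and your explicit remark about the $\log t=0$ degeneracy at $t=1$ is a small clarification the paper omits.
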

\begin{proof}[Proof of Lemma \ref{lemma:round-robin-pbm}]
We prove the lemma using inductive reasoning. Consider any two arbitrarily chosen items \(a_i, a_j \in \totset\). 

\textbf{Base Case} (\(t = 1\)):  
   At the time of initialization (\(t \leftarrow 1\)), we have \(\reco_{a_i}(t) = \reco_{a_j}(t) = 1\).  
   Thus, the lemma statement holds for \(t = 1\) since \(\reco_{a_j}(t) \leq \lambda_p^2 \reco_{a_i}(t) + 1\).

\textbf{Inductive Step} (\(t = t_o + 1\)):  
   Assume that at the beginning of some round \(t_o < \ca\), the inductive hypothesis holds:  
   \[
   \reco_{a_j}(t_o) \leq \lambda_p^2 \reco_{a_i}(t_o) + 1.
   \]
   We now show that the lemma statement holds for \(t = t_o + 1\).  

   At \(t = t_o\), one of the following mutually exclusive and exhaustive events must occur-  
\begin{enumerate}[left=2pt]
      \item \(a_i \in \reclist_{t_o}\) and \(a_j \in \reclist_{t_o}\): 
     In this case, at the start of round \(t_o + 1\),  
     \[
     \reco_{a_i}(t_o + 1) = \reco_{a_i}(t_o) + 1, \quad \reco_{a_j}(t_o + 1) = \reco_{a_j}(t_o) + 1.
     \] 
     Thus,  
      \(
     \reco_{a_j}(t_o + 1) \leq \lambda_p^2 \reco_{a_i}(t_o) + 2 \leq \lambda_p^2 \reco_{a_i}(t_o + 1) + 1.
     \)  
     The lemma holds in this case.

   \item \({a_i} \in \reclist_{t_o}\) and \(a_j \notin \reclist_{t_o}\):  
     In this case,  
      \[
     \reco_{a_i}(t_o + 1) = \reco_{a_i}(t_o) + 1, \quad \reco_{a_j}(t_o + 1) = \reco_{a_j}(t_o).
     \]
     Therefore,  
      \(
     \reco_{a_j}(t_o + 1) \leq \lambda_p^2 \reco_{a_i}(t_o) + 1 \leq \lambda_p^2 \reco_a(t_o + 1) + 1.
     \) 
     
     The lemma holds in this case as well.

  \item \({a_i} \notin \reclist_{t_o}\) and \(a_j \notin \reclist_{t_o}\):  
     In this scenario,  
      \[
     \reco_{a_i}(t_o + 1) = \reco_{a_i}(t_o), \quad \reco_{a_j}(t_o + 1) = \reco_{a_j}(t_o).
     \]
     Hence, the lemma statement naturally holds since the quantities remain unchanged.

   \item \({a_i} \notin \reclist_{t_o}\) and \(a_j \in \reclist_{t_o}\): This occurs if and only if \(U_a(t_o) \leq U_{a_j}(t_o)\). For \(t_o < \ca\) in \(\ofpb\), we know:  
     \[
     \sqrt{\frac{\alpha \log t}{p_1^2 \reco_{a_i}(t_o)}} \leq U_{a_i}(t_o) \leq \sqrt{\frac{\alpha \log t}{p_\nreco^2 \reco_a(t_o)}},
     \] 
     \[
     \sqrt{\frac{\alpha \log t}{p_\nreco^2 \reco_{a_j}(t_o)}} \leq U_{a_j}(t_o) \leq \sqrt{\frac{\alpha \log t}{p_\nreco^2 \reco_{a_j}(t_o)}}.
    \] 
     This is a consequence of the fact that \[p_\nreco \reco_{a}(t) \leq \Tilde{\pulls}_{a}(t) \leq p_1 \reco_{a}(t)\]for all $a\in \totset$ (follows directly from the definition of $\Tilde{\pulls}_{a}(t)$). Therefore, \(U_{a_i}(t_o) \leq U_{a_j}(t_o)\) is possible iff \[
     \reco_{a_j}(t_o) \leq \lambda_p^2 \reco_{a_i}(t_o).
     \]
     Thus,  
      \(
     \reco_{a_j}(t_o + 1) = \reco_{a_j}(t_o) + 1 \leq \lambda_p^2 \reco_a(t_o + 1) + 1.
    \)The lemma holds in this case as well.
\end{enumerate}
The lemma statement is always true for $t=t_o +1$. Therefore, by induction, the lemma is true for all rounds \(t \leq \ca\).
\end{proof}

During the first phase of $\ofpb$, the attack ensures that at $t=\ca+1$, $U_a(t) \leq \sqrt{\frac{\alpha \log T}{p_\nreco^2 \reco_a(t)}}\leq \pmy$ for all $a\in \totset$ and $t\leq\ca$. For this to happen, we need
\[\reco_a(\ca+1)\geq \frac{\alpha \log T}{\psq}\geq \frac{\alpha \log t}{\psq} \; \forall\; a \in \totset.\]
Let \(m\) denote the minimum number of recommendations received by an item in \(\totset\) during the first phase of \(\ofpb\). The corresponding maximum number of recommendations will be less than \(\lambda_p^2 m + 1\), based on the Lemma \ref{lemma:round-robin-pbm}. 

The total number of recommendations during the first phase is equal to \(\nreco\ca\), which implies 
\begin{equation}
\label{eq:reco-sum}
\sum_{a=1}^\ntot \reco_a(\ca) = \nreco\ca.
\end{equation}

Since each item receives at least \(m\) and at most \(\lambda_p^2 m + 1\) recommendations, we can write  
\[
\ntot \cdot m \leq \sum_{a=1}^\ntot \reco_a(\ca) \leq \ntot \cdot (\lambda_p^2 m + 1).
\]  

Substituting \(\sum_{a=1}^\ntot \reco_a(\ca) = \nreco\ca\), we obtain 
\begin{equation}
\label{eq:reco-frac}    
\ntot \cdot (\lambda_p^2 m + 1) \geq \nreco\ca.
\end{equation}

On substituting the value of $\ca$ from Section \ref{subsec:pbm-skel}, we get $m\geq \alpha\log T/\psq$ and therefore $U_{a}(\ca+1)\leq \pmy$ for all $a \in \totset$, thus proving Lemma \ref{lemma:c1_pbm}
\end{proof}
\subsection{Proof of Lemma \ref{lemma:c2_pbm} }
\begin{proof}[Proof of Lemma \ref{lemma:c2_pbm}]
Assume an item $a \in \tarlist$ receives $m_a$ recommendations in the first phase and $n_a$ recommendations in the second phase of $\ofpb$. Then at $t=\ca+\cb+1$,
\[U_a(t)> \frac{\nclk_a(t)}{\Tilde{\pulls}_{a}(t)}\geq \frac{\nclk_a(t)}{p_1\reco_{a}(t)} = \frac{\nclk_a(t)}{p_1(n_a + m_a)}.  \]
During this phase, each examination for an item in $\tarlist$ corresponds to a click.

With a probability $\geq1-1/T$, $\nclk_a(t)\geq p_\nreco n_a -\sqrt{n_a \log T}$ for all $t \in \{\ca+1,\dots,\ca+\cb\}$, this follows from an argument similar to the one given in the Proof of \ref{lemma:eff_cas} (Appendix \ref{proof:eff_cas}). During the second phase of $\ofpb$, the attack ensures that
\begin{equation}
    \label{eq:st_pbm}
\frac{ p_\nreco n_a -\sqrt{n_a \log T}}{p_1(n_a + m_a)}\geq \pmy\; \forall\; a \in \totset.
\end{equation}
Carrying forward the arguments given in the proof of Lemma \ref{lemma:c1_pbm}, $m_a$ is upper-bounded by \[m_a \leq \nreco\ca - \frac{(L-1)\alpha\log T}{\psq p^2_\nreco} \; \forall\; a \in \totset. \]
On setting $m_a$ to the maximum possible value and solving \eqref{eq:st_pbm}, we obtain $n_a\geq\gamma \log T$. Looking back at \eqref{eq:reco-sum} and \eqref{eq:reco-frac}, we find that they hold for any $t\leq\ca$ as well. During phase 1, the minimum number of recommendations for any item between time $t$ and $t+\Delta_t$ is lower-bounded by
\[
    m_{\Delta_t}= \frac{1}{\lambda_p^2}\left\{\frac{\nreco \Delta_t}{\ntot}-1\right\},\; t+\Delta_t\leq\ca.
\]
At the beginning of the second phase, the items in \(\tarlist\) are indistinguishable from the rest and receive at least an \(m_{\Delta_t}/\Delta_t\) fraction of recommendations between $t=\ca$ and $t=\ca+\Delta_t$ for \(\Delta_t << \cb\). As the second phase progresses, the empirical means of the items in \(\tarlist\) increase, making them more distinguishable from the rest of the items in \(\totset\). Consequently, the instantaneous fraction of recommendations for these items increases from \(m_{\Delta_t}/\Delta_t\), converging asymptotically to \(1\). Therefore, $n_a\geq m_{\cb}$; on solving
\[
    m_{\cb}= \frac{1}{\lambda_p^2}\left\{\frac{\nreco \cb}{\ntot}-1\right\} = \gamma \log T,
\]
we obtain the value of $\cb$ given in Section \ref{subsec:pbm-skel}. Thus, on setting 
\(
    \cb=\left\lceil\frac{L \left(\lambda^2_p\gamma\log T + 1\right)}{\nreco}  \right\rceil,
\)
with a probability $\geq 1-1/T$, $U_a(t)> \nclk_a(t)/\Tilde{\pulls}_a(t)\geq \pmy$ for a given item $a$ in $\totset$ and $t=\ca+\cb+1$. Extending this analysis to all the items in $\totset$, we obtain $\Pr\{\succb \} \geq 1-\nreco/T$, where $\succb=\mathcal{E'}_1 \cap \mathcal{E'}_2 \dots \cap \mathcal{E'}_\nreco$ and $\mathcal{E'}_i$ is the event that $U_{\tarlist[i]}\left(\ca+\cb +1\right)>\pmy$. Additionally, $U_{a'}(\ca+\cb+1)  \leq \pmy$ for all $ a' \notin \tarlist$ as none of the other items outside of $\tarlist$ receive any clicks during this phase; hence proving Lemma \ref{lemma:c2_pbm}.
\end{proof}
\subsection{Proof of Lemma \ref{lemma:eff_pbm} }
\label{subsec: eff_pbm}
\begin{proof}[Proof of Lemma \ref{lemma:eff_pbm}]
This proof proceeds similar to that of Lemma \ref{lemma:eff_cas}. Let an item $a \in \tarlist$ exist, which receives $m$ recommendations in the initial two phases and further $n$ recommendations during the third phase. For this item in $\totset$, we define the following quantities:
\begin{align*} 
p_{avg1}^{(a)}&=\frac{\sum^{\cost}_{\tau =1}\sum^{\nreco}_{i =1} p_i \mathds{1}{\left\{a=a_{i,\tau}\right\}}}{\sum_{\tau=1}^{\cost} \mathds{1}{\{a \in \reclist_\tau\}}},\; p_{avg2}^{(a)}=\frac{\sum^{t}_{\tau =\cost+1}\sum^{\nreco}_{i =1} p_i \mathds{1}{\left\{a=a_{i,\tau}\right\}}}{\sum_{\tau=\cost+1}^{t} \mathds{1}{\{a \in \reclist_\tau\}}}. 
 \end{align*} 
 
By Hoeffding's inequality, the number of clicks for the item $a$ at any time $t\leq T$ in the third phase will be greater than $n\wts_a p_{avg2}^{(a)}-\sqrt{n\log t}$ with probability at least $1-1/T$, and thus, assuming the success of phases 1 and 2, the UCB for this item at time $\cost<t\leq T$ is given by
{\small
\begin{align*}
U_a\left(t\right) &\geq \frac{\nclk_a(t)}{\Tilde{\pulls}_{a}(t)} + \sqrt{\frac{\alpha \reco_a (t)\log t}{\Tilde{\pulls}_{a}^2(t)}} \geq \frac{m\pmy p_{avg1}^{(a)} +n\wts_ap_{avg2}^{(a)}}{np_{avg1}^{(a)}+ mp_{avg2}^{(a)}} + \frac{\sqrt{\alpha (n+m) \log t}- \sqrt{n\log t}}{np_{avg1}^{(a)}+mp_{avg2}^{(a)}} > \pmy,
\end{align*}}
which follows from the fact that $\wts_a> \pmy$ and $\alpha>1$.Therefore, the UCB of any item in  $\tarlist$ will be greater than $\pmy$ for any round while $\cost<t\leq T$ with the probability $\geq 1-1/T$. Extending this analysis to all the target items, we can say that all items in $\tarlist$ maintain UCBs \( U_a(t) > \pmy \) under $\pbmucb$ with a probability
\[
\Pr\{\succc\; |\; \succa \cap \succb  \}  \geq 1-\nreco/T,\]
where  $\succc=\mathcal{E''}_1 \cap \mathcal{E''}_2 \dots \cap \mathcal{E''}_\nreco$ and $\mathcal{E''}_i$ is the event that $U_{\tarlist[i]}\left(t\right)>\pmy$ for all $C<t\leq T$.
\end{proof}  

\vspace*{-0.5 em}
\section{Supplementary Empirical Analysis}
\label{app:emp}
\vspace*{-0.5 em}
This section offers additional details on the empirical analysis discussed in Section \ref{sec:emp}.

We conduct experiments on the real-world MovieLens dataset, processed in the same way as done in \cite{Zong2016CascadingBF}. In all experiments, we set $T = 5\times10^5$, $\nreco = 3$, $\ntot = 10$, $\alpha=1.5$, and average results over 50 runs. For PBM, we use $\poslist = (0.95, 0.90, 0.85)$. $\totset$ is created by arbitrarily choosing 10 movies from the dataset. All the results in Section \ref{sec:emp} are provided for a given $\totset$, with the following attraction probabilities for the constituent items:
\[w=(0.336, 0.204, 0.163, 0.125,
 0.112,
 0.105,
 0.099,
 0.090,
 0.086,
 0.082).\]
 The rewards are sampled in an I.I.D. manner from the dataset. With $\tarlist = (4,7,10)$ for $\ofcs$ and $\tarlist = (8,9,10)$ for $\ofpb$, we obtain $w_{\text{min}} = 0.082$ for both the target sets, and thus, for both manipulation strategies, we use $\wts_m =0.08$. For the given $\totset$ and the abovementioned parameters, the phase durations (manipulations) of the attack strategies are as follows-
 \vspace*{-0.5em}
 \begin{enumerate}
     \item $\ofcs$: $\ca=10260$ and $\cb=1005$.
     \item $\ofpb$: $\ca=11507$ and $\cb=1304$. 
 \end{enumerate}
The required reward manipulation for both the algorithms is well under $3\%$ of the total number
of rounds, showing that even a small group of adversarial users may greatly affect the outcome of the learning algorithm. Note that there is a significant gap between the attraction probability of the items in $\reclist^\ast$ and the rest, with $\approx25\%$ difference in that of items 3 and 4. Therefore, the stated amount of reward manipulation does not significantly affect the overall distribution.
\begin{table}[t]
\centering
\renewcommand{\arraystretch}{1.1}
\vspace*{-1em}
\caption{\small Comparison of costs and regrets for different attack strategies with horizon $T=5\times 10^5$.}

\begin{minipage}{0.48\linewidth}
\centering

\subcaption{\small On CascadeUCB1}

\resizebox{\linewidth}{!}{%
\begin{tabular}{lcc}
\toprule
Attack    & Manipulations & $\regret(T)$ \\ \midrule
No Attack & - & $2.233\times10^3$ \\
CascadeATQ & 11265 & $1.509\times10^4$\\
CascadeAlphaAtk & 2927& $1.457\times10^5$\\
$\ofcs$  & 11265& $1.473\times10^5$\\
\bottomrule
\end{tabular}}
\label{tab:cost_cas}
\end{minipage}
\hfill
\begin{minipage}{0.45\linewidth}
\centering

\subcaption{\small On PBM-UCB}

\resizebox{\linewidth}{!}{%
\begin{tabular}{lcc}
\toprule
Attack    & Manipulations & $\regret(T)$ \\ \midrule
No Attack & - & $2.507\times10^3$ \\
PBMATQ & 12811 & $2.153\times10^4$\\
PBMAlphaAtk & 3432& $1.989\times10^5$\\
$\ofpb$  & 12811& $1.922\times10^5$\\
\bottomrule
\end{tabular}}
\label{tab:cost_pbm}
\end{minipage}
\vspace*{-1em}
\label{tab:cost}
\end{table}

In Section \ref{sec:emp}, we presented the results for both the attack strategies and showcased how they are able to successfully mislead $\cascadeucb$ and $\pbmucb$ to recommend their respective target items for a large majority of the rounds in the third phase.
\begin{figure}[h]
%\vspace*{-0.5em}
\centering
\begin{subfigure}[h]{0.49\linewidth}
    \centering
    \includegraphics[width=\linewidth]{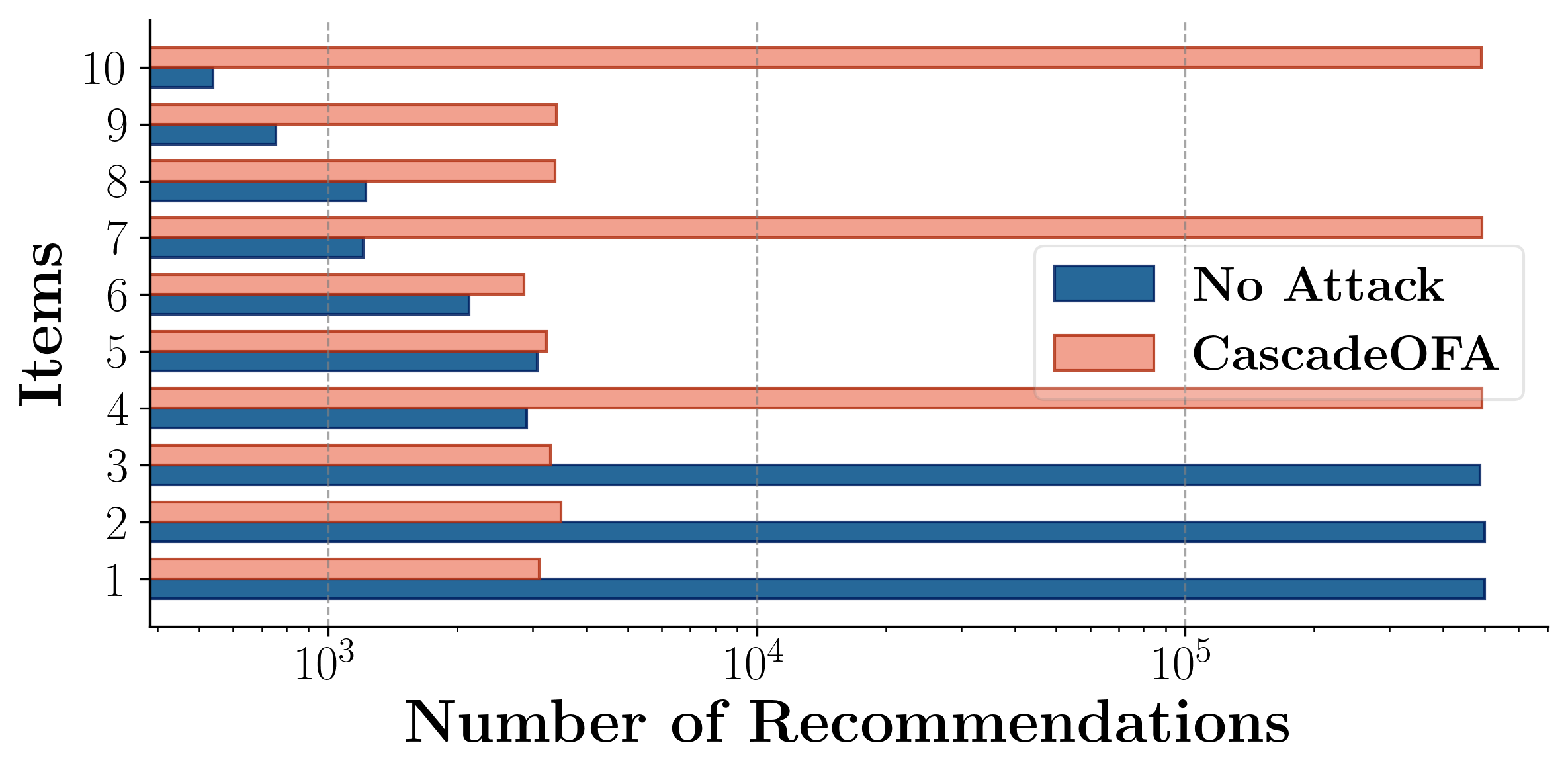}
    \subcaption{CascadeKL-UCB}
    \label{fig:success_kl}
\end{subfigure}
\hfill
\begin{subfigure}[h]{0.49\linewidth}
    \centering
    \includegraphics[width=\linewidth]{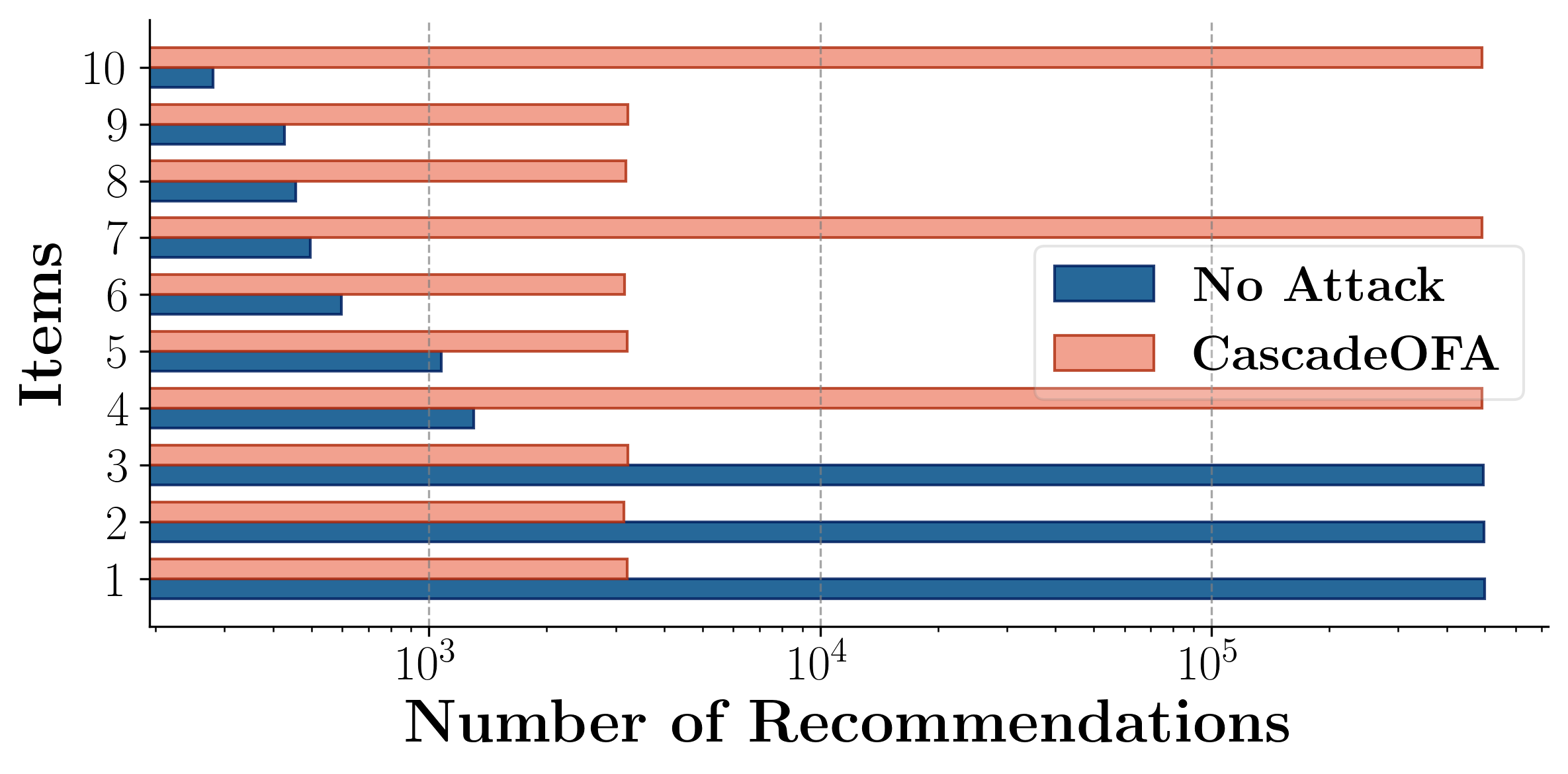}
    \subcaption{TS-Cascade}
    \label{fig:success_ts}
\end{subfigure}
\caption{The number of recommendations for the target items with and without $\ofcs$.}
\vspace*{-0.5em}
\label{fig:success1}
\end{figure}

We benchmarked our strategies against existing attack techniques for OLTR, including the attack-then-quit strategies (CascadeATQ and PBMATQ), adapted from the one given in \cite{wang2024adversarial}, which is effective against arm-elimination based algorithms but fails to mislead UCB-based algorithms. We also compared against the AlphaATk strategies (CascadeAlphaAtk and PBMAlphaAtk) from \cite{10.5555/3666122.3667927}, which require access to the user feedback received by the algorithm in each round. 

CascadeATQ and PBMATQ attacks were configured in a manner to require the same amount of reward manipulation as $\ofcs$ and $\ofpb$, respectively. For both the AlphaAtk strategies, parameters $\Delta_0$ and $\delta$ (defined in \cite{10.5555/3666122.3667927} ) are set to 0.1, following Appendix B.1 in \cite{10.5555/3666122.3667927}.

We present the required reward manipulations and the corresponding regret implications for the aforementioned attack strategies in Tables \ref{tab:cost_cas} and \ref{tab:cost_pbm}. While our strategies and the AlphaAtk strategies significantly increase the regret of OLTR algorithms, the regret induced by the ATQ strategies remains relatively limited compared to the amount of reward manipulation they require. These results are qualitatively consistent with Figure \ref{fig:regret}.

Having shown the efficacy of our attack strategies for UCB-based algorithms, we conducted empirical experiments with some of the other popular OLTR
algorithms as well, using the same parameters as earlier. Figure \ref{fig:success1} presents the results for CascadeKL-UCB and TS-Cascade in the presence of $\ofcs$, with  $\ca=10260$ and $\cb=1005$ (same as earlier). $\ofcs$ is able to mislead both the OLTR algorithms into recommending the target items for a large majority of rounds.

Motivated by the empirical results, we plan to extend our theoretical analysis of $\ofcs$ to other Cascade model-based OLTR algorithms, with the goal of establishing regret bounds and recommendation guarantees for algorithms such as CascadeKL-UCB and TS-Cascade.

\end{document}